\documentclass[10pt,twocolumn,letterpaper]{article}

\usepackage{cvpr}              
\definecolor{cvprblue}{rgb}{0.21,0.49,0.74}
\usepackage[pagebackref,breaklinks,colorlinks,allcolors=cvprblue]{hyperref}
\usepackage{algorithmic}
\usepackage{listings}%
\usepackage{algorithm}%
\usepackage{amsthm}
\usepackage[accsupp]{axessibility}
\usepackage{amsmath}
\usepackage{booktabs} 
\usepackage{multirow}
\usepackage{makecell}
\usepackage{adjustbox}
\usepackage[most]{tcolorbox}
\usepackage{lipsum} 
\usepackage{xcolor}
\usepackage[table]{xcolor}
\newtheorem{definition}{Definition}
\newtheorem{theorem}{Theorem}

\newtheorem{corollary}{Corollary}
\newtheorem*{theorem*}{Theorem}
\newtheorem*{corollary*}{Corollary}
\definecolor{cbg}{HTML}{F7F3ED}    
\definecolor{cframe}{HTML}{6B5341}  
\definecolor{ctitle}{HTML}{2E2219}  
\definecolor{ctitlebar}{HTML}{EDE3D8}
\definecolor{lavender1}{RGB}{230,230,250}
\definecolor{lavender2}{RGB}{210,210,240}
\def\paperID{***} 
\def\confName{CVPR}
\def\confYear{2026}

\title{Beyond Heuristic Prompting: A Concept-Guided Bayesian Framework for Zero-Shot Image Recognition}

\author{
Hui Liu\textsuperscript{1},
Kecheng Chen\textsuperscript{1},
Jialiang Wang\textsuperscript{1,2},
Xianming Liu\textsuperscript{2},
Wenya Wang\textsuperscript{3},
Haoliang Li\textsuperscript{1}\\[0.6em]
\textsuperscript{1}City University of Hong Kong \quad
\textsuperscript{2}Harbin Institute of Technology \quad
\textsuperscript{3}Nanyang Technological University
}

\begin{document}
\maketitle
\begin{abstract}
Vision-Language Models (VLMs), such as CLIP, have significantly advanced zero-shot image recognition. However, their performance remains limited by suboptimal prompt engineering and poor adaptability to target classes. While recent methods attempt to improve prompts through diverse class descriptions, they often rely on heuristic designs, lack versatility, and are vulnerable to outlier prompts. This paper enhances prompt by incorporating class-specific concepts. By treating concepts as latent variables, we rethink zero-shot image classification from a Bayesian perspective, casting prediction as marginalization over the concept space, where each concept is weighted by a prior and a test-image conditioned likelihood. This formulation underscores the importance of both a well-structured concept proposal distribution and the refinement of concept priors. To construct an expressive and efficient proposal distribution, we introduce a multi-stage concept synthesis pipeline driven by LLMs to generate discriminative and compositional concepts, followed by a Determinantal Point Process to enforce diversity. To mitigate the influence of outlier concepts, we propose a training-free, adaptive soft-trim likelihood, which attenuates their impact in a single forward pass. We further provide robustness guarantees and derive multi-class excess risk bounds for our framework. Extensive experiments demonstrate that our method consistently outperforms state-of-the-art approaches, validating its effectiveness  in zero-shot image classification. Our code is available at \url{https://github.com/less-and-less-bugs/CGBC}.
\end{abstract}

\section{Introduction}
\label{sec:intro}
Zero-shot image recognition refers to classifying images into  classes unseen during training. Vision-language models\footnote{Since CLIP is trained on hundreds of millions of text-image pairs sourced from the internet, ~\citet{clip} evaluate its zero-shot capability at the dataset level.} (VLMs)~\citep{clip, blip2, evaclip, llava, xiao2025exploring, xiao2024vanessa, yin2026aosnet} have revolutionized this area through large-scale natural language supervision, enabling the understanding of  a broad range of visual concepts far beyond traditional model capabilities.  A notable example is CLIP~\citep{clip}, which achieves zero-shot classification without task-specific fine-tuning by aligning image and text representations in a shared embedding space. Specifically, the test images are projected using the visual encoder, while class names are incorporated into a predefined prompt template (e.g., ``A photo of \{class\}") and encoded via the text encoder. Classification is determined by maximal similarity between these image and text embeddings.

However, the performance of such paradigm often remains suboptimal in real-world scenarios, likely due to ineffective prompt engineering~\citep{coop, cocoop} and limited adaptation to target classes~\citep{clipadapter, tipadapter, yu2023task, huang2025cosmic}. To address this, existing approaches~\citep{shu2022tpt, ma2024swapprompt, feng2023diverse, matcvpr24, lafon2025cliptta} apply image augmentations to create multiple views and aggregate predictions via specific optimization objectives such as entropy minimization. Another line of work~\citep{pratt2023does, zeroshotcvpr23, novack2023chilszeroshot} enhances the single prompt with multiple LLM-generated class descriptions or subclass labels (e.g., “bulldog” for “dog”), to better exploit the model's knowledge beyond class names. Notably, the latter group only requires  offline prompt generation and encoding, thus eliminating the computational latency of test-time view processing and optimization.

Despite their promise, prompt-enhanced methods exhibit limited versatility, particularly in fine-grained classification tasks (e.g., “2000 AM General Hummer SUV” in the Cars dataset~\citep{krause20133d}), where defining meaningful sub-classes is inherently difficult. Moreover, these approaches rely on heuristic designs without theoretical grounding and systematic evaluation. For example, CuPL~\citep{parkhi2012cats} queries LLMs with multiple class-related questions (e.g., What does \{class\} look like?), uses the responses as prompts, and averages their similarity to the test image for classification, without further analysis of which types of prompts are most effective. Additionally, as shown in Fig. \ref{fig:fig1}, similarity scores between these enhanced prompts and test images often follow a skewed or long-tail distribution, indicating the presence of outlier prompts that may degrade classification accuracy.
\begin{figure}[htbp]
    \centering
    \includegraphics[width=0.9\linewidth]{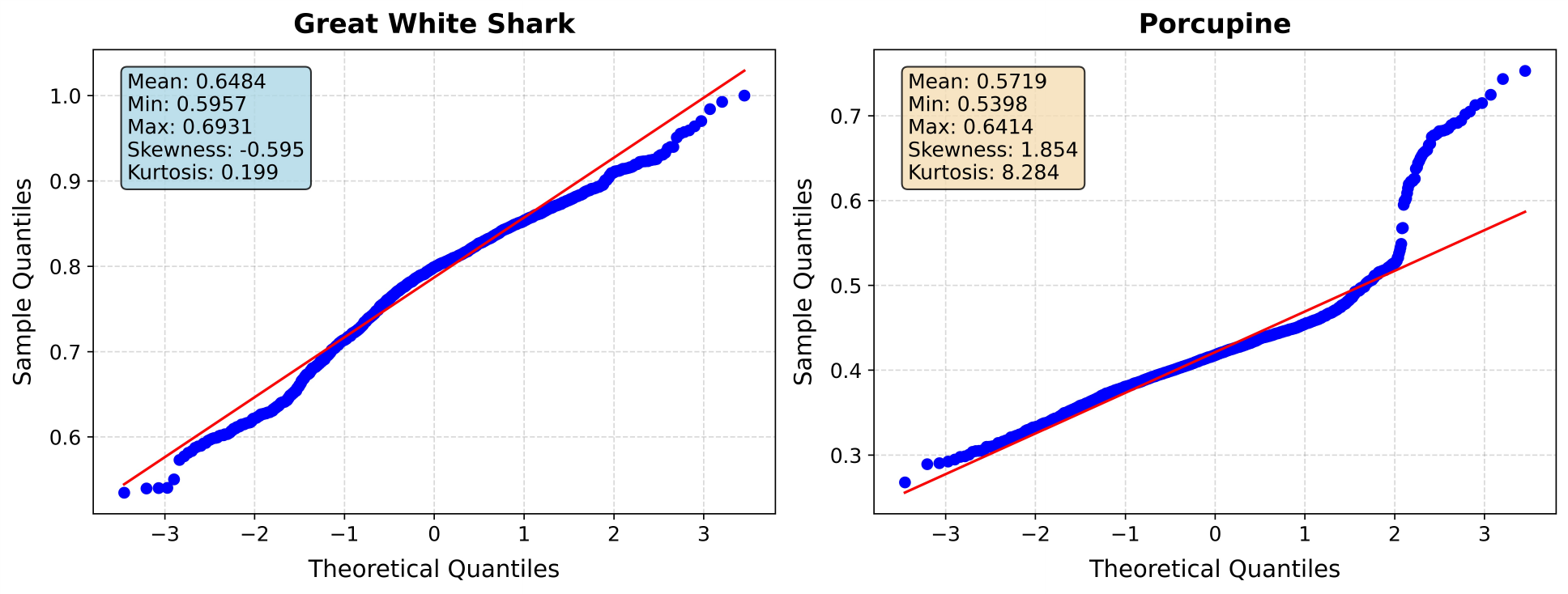}
    \caption{Q--Q plots of similarity distributions between all  Great White Shark images in ImageNet dataset and CuPL~\citep{pratt2023does} prompts for this class and Porcupine. Departures from the normal reference line indicate non-normality. Pronounced skewness is taken as $|\text{skewness}|>0.5$, and heavy tails as excess kurtosis $>3$.}
    \label{fig:fig1}  
    \vspace{-15pt}
\end{figure}

To address these challenges, we enhance prompts with class-specific concepts to leverage the rich visual knowledge encoded in VLMs, making the approach adaptable to varying levels of classification granularity. By treating concepts as latent variables, we rethink zero-shot image classification from  a Bayesian perspective and formulate it as marginalization over the concept space, where each concept is weighted by a prior and a likelihood conditioned on the input image. This formulation yields two key insights: 1) the need for a well-structured concept proposal distribution (sampling algorithm) to approximate the vast concept space, and 2) the importance of likelihood-based refinement of the prior, highlighting the limitations of simple heuristic averaging across prompts~\citep{pratt2023does}. To instantiate this Concept-Guided Bayesian Classification (CGBC) framework, building on classical concept discovery principles~\citep{glanois2022neuro}, we suggest the concept proposal distribution should satisfy discriminability (inter-class distinction), compositionality (combination of atomic concepts), and diversity (minimal semantic redundancy), which collectively optimize concept effectiveness under constrained budgets. As such, we propose a LLM-driven multi-stage concept synthesis pipeline. The LLM is iteratively prompted to generate class-specific atomic concepts that emphasize inter-class distinctions. We then sample combinations of these atomic concepts and select a subset using Determinantal Point Process (DPP)~\citep{kulesza2012determinantal} to reduce semantic overlap in the selected concept set. To mitigate the impact of outlier concepts, we introduce an adaptive soft-trim based likelihood function that provides robust mean estimation through outlier down-weighting in a single forward pass. We also establish theoretical robust guarantees and multi-class excess risk bounds for our framework. Extensive evaluation across eleven image recognition tasks demonstrates CGBC's consistent superiority over existing state-of-the-art methods.
The contribution of our work is summarized as follows: 
\begin{itemize}
    \item We rethink VLM-based zero-shot image recognition from a Bayesian perspective and  underscore the importance of a well-structured concept proposal distribution and input-conditioned refinement through a likelihood function.
    
    \item We propose a multi-stage concept synthesis pipeline to construct a more expressive and sampling-effective concept proposal distribution  and introduce a training-free adaptive soft-trim likelihood function to mitigate the negative influence of outlier concepts. We further provide theoretical guarantees for our framework.
    
    \item We empirically validate the effectiveness of our framework across a broad range of zero-shot image recognition tasks, demonstrating the practical benefits of our concept  synthesis strategy and likelihood design.
\end{itemize}

\section{Related Works}
\label{sec:related}
\subsection{Zero-shot Image Recognition in VLMs} 
The generalization capabilities of vision-language models have attracted significant attention in recent years due to the challenges of fine-tuning all model parameters while preserving robust performance across domains~\citep{coop, lester2021power}. While most existing research concentrates on few-shot settings~\citep{cocoop, zhu2023prompt, chen2022plot, huang2022unsupervised, wang2023improving, tipadapter, shang2024incremental, martin2024transductive, zanellaboostingnips24, silva2024closer, udandarao2023sus, zhang2023prompt}, our work focuses on the more challenging zero-shot recognition scenario, where no labeled examples are available for any class during testing. Current zero-shot approaches can be broadly categorized into two groups. The first leverages test-time data augmentation, generating diverse views of input images to promote prediction consistency, either through prompt tuning~\citep{shu2022tpt, ma2024swapprompt, feng2023diverse} or view-specific optimization~\citep{matcvpr24}. However, these methods incur significant computational overhead due to the need for test-time view encoding and optimization. The second category enhances prompts through using LLMs to generate detailed descriptions~\citep{pratt2023does, COLA} or constructing hierarchical structures (e.g., subclasses of the target class)~\citep{zeroshotcvpr23, novack2023chilszeroshot}. While such methods reduce test-time latency through offline prompt generation and encoding, they often exhibit limited adaptability to fine-grained recognition, where defining meaningful subclasses is a non-trivial task. Moreover, they typically rely on heuristic prompt design and lack a principled framework for both prompt construction and subsequent inference. As such, we introduce class-specific concepts as latent variables and provide a theoretically grounded view by rethinking zero-shot recognition from a Bayesian perspective.

\subsection{Concept Discovery} 
Concept discovery, the inference of general patterns from specific examples, is fundamental across cognitive science and inductive logic programming~\citep{ellis2021dreamcoder, liu2023unsupervised, chater2008probabilistic}. A common approach involves defining a hypothesis space of potential concepts and applying Bayesian inference to determine which hypothesis most likely generates the observed data probabilistically~\citep{ellis2021dreamcoder, saad2019bayesian, tian2020learning}. This paradigm raises a critical issue of balancing the expressiveness of the hypothesis space with computational tractability. Prior work~\citep {qu2020rnnlogic, glanois2022neuro} addresses this by promoting sparse concept representations while leveraging compositional structures to enhance expressive power. Task-specific objectives further refine discovery by aligning it with downstream utility~\citep{liu2023interpretable, ellis2024human}. Building on these insights, we argue that an effective concept proposal distribution should satisfy three criteria, including discriminability (to distinguish between classes), compositionality (to support meaningful combinations of atomic concepts), and diversity (to minimize semantic redundancy).

\section{Concept-Guided Bayesian Classification}
\label{sec:preliminary}
In a standard zero-shot image recognition setting, consider a set of candidate classes $\mathcal{Y} = \{ Y_1, \dots, Y_K \}$, where $Y_i$ denotes the $i$-th class and $K$ the total number of classes. Given a test image $X$, CLIP-like VLMs represent each class $Y_i$ using the prompt “A photo of a $Y_i$”.  The model then processes both the input image $X$ and the textual description through respective encoders, yielding $l_2$-normalized visual and textual embeddings.  The posterior probability $p(Y_i|X)$ is computed via the cosine similarity between these embeddings, normalized to the interval $[0,1]$. The classification decision can be formulated as:
\begin{equation}
\label{eq:1}
    \hat{Y} = \arg\max_{Y_i \in \mathcal{Y}} \; p(Y_i | X).
\end{equation}
\begin{figure*}[htbp]
    \centering
    \includegraphics[width=0.95\linewidth]{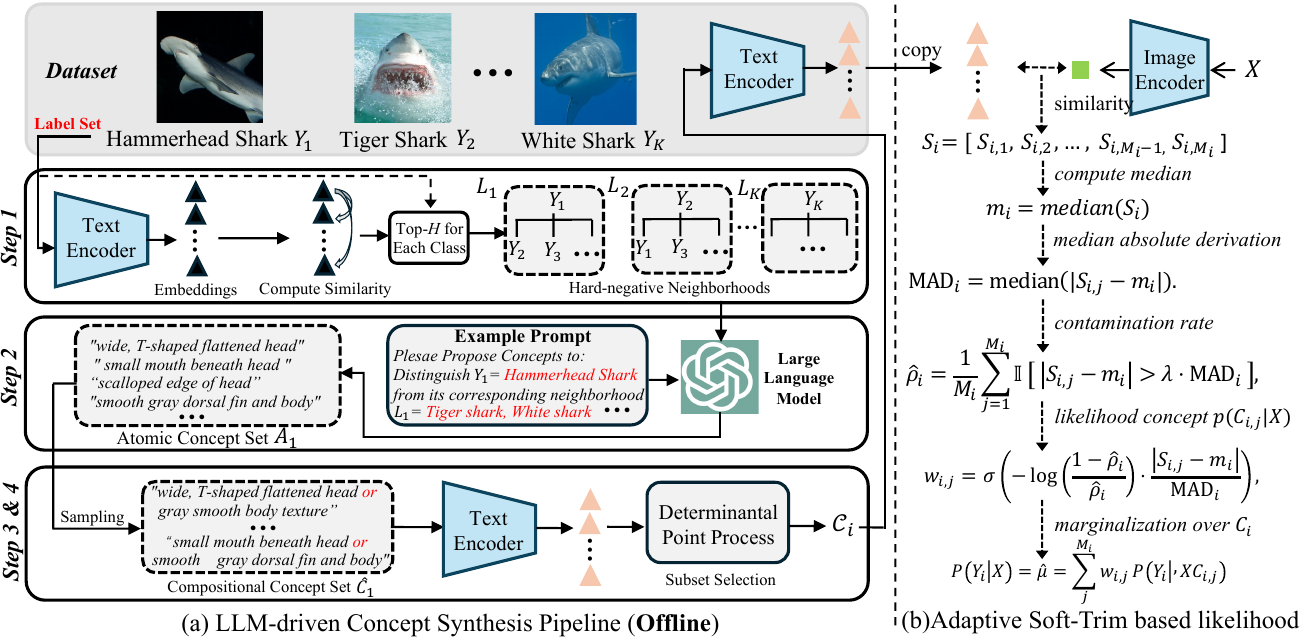}
    \caption{Instantiation of our proposed Concept-Guided Bayesian Classification (CGBC) framework. (a) LLM-driven Concept Synthesis Pipeline: Constructs class-aware hard-negative neighborhoods in Step 1, generates discriminative atomic concepts via LLM prompting in Step 2, composes higher-order concepts through random sampling in Step 3, and selects diverse concepts using Determinantal Point Processes (DPPs) in Step 4. (b) Adaptive Soft-Trim Likelihood Estimation: Estimates contamination rates in similarity distributions and downweights outlier concepts that deviate significantly from the median, improving robustness in concept-based image classification. }
    \label{fig:model}  
    \vspace{-15pt}
\end{figure*}

While this zero-shot paradigm demonstrates empirical efficacy, it solely leverages the encoded knowledge within VLMs' parameters, neglecting critical prior information such as conceptual attributes and inter-class relationships beyond class labels. Therefore, we propose to integrate class-specific concepts into the original prompt template to generate more discriminative descriptions of the form ``A photo of a \{class\} with \{concept\}". Through hypothesizing a test image $X$ can be generated through a set of latent concepts $\mathcal{C} = \{ C_1, \ldots, C_K \}$, where each $C_i$ represents an infinite concept space corresponding to class $Y_i$, we rethink this task from a Bayesian perspective. The posterior probability $p(Y_i| X)$ can be expressed via marginalization:
\begin{equation}
    p(Y_i|X) = \sum_{C_i} p(Y_i|X, C_i) p(C_i|X),
\end{equation}where $p(C_i|X)$ denotes the posterior distribution over latent concept space $C_i$ conditioned on image $X$ and $p(Y_i| X, C_i)$ can be computed analogously to the standard zero-shot setting. However, the direct computation of $p(C_i|X)$ is intractable due to the infinite-dimensional nature of the concept space. As such, we apply Bayes' theorem:
\begin{equation}
    p(C_i|X) = \frac{p(X|C_i) p(C_i)}{p(X)}\propto p(X|C_i) p(C_i), 
\end{equation}where $p(C_i)$ represents the prior distribution over $C_i$, encoding rich world knowledge for image recognition \citep{kahneman2011thinking}. The likelihood term $p(X|C_i)$ quantifies the compatibility between image $X$ and concept $C_i$, refining the coarse prior into a more precise posterior. The normalization term $p(X)$ is constant and thus omitted from relative probability calculations. The final classification decision is obtained through marginalization over $C_i$:
\begin{equation}
\label{eq:3}
    p(Y_i|X) \approx \sum_{C_i} p(Y_i|X, C_i)p(X|C_i) p(C_i).
\end{equation}
To address the computational challenges posed by infinite concept space, we introduce a proposal distribution $q(C_i)$ inspired by sampling theory \citep{kloek1978bayesian}, which enables the approximation of the summation in Eq.~\eqref{eq:3} through a finite sum over sampled concepts $\mathcal{C}_i=\{C_{i,1}, \ldots, C_{i,M_i}\}$ for each class where each concept will be transformed into a prompt:
\begin{equation}
\label{eq:6}
p(Y_i|X) \approx \sum_{{C_{i,j}\in \mathcal{C}_i}}^{M_i} p(Y_i|X, C_{i,j})p(X|C_{i,j}).
\end{equation} 
In summary, this section presents the concept-guided Bayesian classification framework (CGBC) and, more importantly, underscores the importance of constructing  appropriate concept proposal distributions and test image-conditioned likelihood for prior refinement in VLM-based zero-shot image recognition.
\section{Framework Implementation}
\label{sec:method}
In this section, we present the implementation of our proposed Concept-Guided Bayesian Classification (CGBC) framework. As illustrated in Fig. \ref{fig:model}, we introduce a LLM-driven, multi-stage concept synthesis pipeline designed to construct a concept proposal distribution characterized by discriminability, compositionality, and diversity. Additionally, we incorporate an Adaptive Soft-Trim based Likelihood function to mitigate the negative impact of outlier concepts, refining the concept prior based on input images. The details of these two components and theoretical analysis are elaborated in  Sec.\ref{sec:4.1}, Sec. \ref{sec:4.2} and Sec. \ref{sec:4.3}, respectively.
\subsection{LLM-driven Concept Synthesis Pipeline}
\label{sec:4.1}
Defining an expressive yet computationally tractable concept proposal distribution $q(C_i)$ is inherently challenging due to the effectively infinite nature of the concept space. Inspired by prior work in classical concept discovery~\citep{qu2020rnnlogic, glanois2022neuro, ellis2021dreamcoder}, we argue that an effective $q(C_i)$ should satisfy  the below three properties: 
\begin{definition}[Discriminability] The concept from $q(C_i)$ should accurately characterize the target class $Y_i$ while enhancing inter-class separability. Ideally, $q(C_i)$ should be conditioned on both $Y_i$ and its complement $\mathcal{Y}_{/i}$ (i.e., all other classes). For example, when distinguishing Hammerhead Shark from other shark species, the concept ``T-shaped flattened head'' is highly discriminative, whereas a generic concept like ``smooth gray body'' is not.
\end{definition}
\begin{definition}[Compositionality] Distinct atomic concepts should be combined to form higher-order composite concepts (e.g., ``T-shaped flattened head or smooth gray body''). This compositional structure enables richer semantics and robustness, especially when individual atomic concepts may be insufficient or ineffective.
\end{definition}
\begin{definition}[Diversity]The concepts from $q(C_i)$ should exhibit low semantic redundancy to maximize coverage of the true concept prior $p(C_i)$ under a fixed sampling budget. Since near-duplicate concepts offer diminishing returns and incur extra computational cost, promoting diversity improves both expressiveness and computational efficiency.
\end{definition}

To simultaneously achieve these three aspects, we propose a four-stage concept synthesis pipeline, which integrates the extensive world knowledge of LLMs with the representational strength of VLMs as follows:

\noindent\textbf{Step 1}. Construction of Class-aware Hard-negative Neighborhoods: Given a set of classes $\mathcal{Y} = \{Y_1, \dots, Y_K\}$, we first encode each class label using CLIP's text encoder to obtain normalized embeddings and compute pairwise cosine similarities between these embeddings. For each class $Y_i$,  we identify the $H$ most semantically similar classes, forming its hard-negative neighborhood $L_i$, which provides effective approximation to $\mathcal{Y}{/i}$ and 
establishes the foundation for ensuring discriminability in subsequent procedures.

\noindent\textbf{Step 2}. Atomic Concept Generation via Contrastive Prompting: Leveraging the class-specific hard-negative neighborhood $L_i$, we design contrastive prompts that instruct LLMs to generate candidate concepts  that distinguish $Y_i$ from  $L_i$. This procedure can produce a set of atomic concepts $\mathcal{A}_i = \{A_{i,1}, \dots, A_{i,M_{A}}\}$ after a few API calls for $Y_i$, where $M_{A}$ denotes the number of atomic concepts. Unlike conventional prompting methods that treat class labels in isolation~\citep{novack2023chilszeroshot}, our approach encourages the generation of more discriminative concepts by explicitly modeling inter-class contrast. To reduce redundancy, we compute cosine similarity between newly generated and existing concepts, pruning any candidate with a similarity exceeding 0.9.

\noindent\textbf{Step 3}. Compositional Concept Construction: For each class $Y_i$, we sample $N_c$  combinations of atomic concepts (with a fixed number of concepts per prompt) from the pool $\mathcal{A}_i$ without replacement. These combinations are composed using logical operators (e.g., “or”) to generate a candidate set of composite concepts, denoted by $\hat{C}_i$. This compositional strategy enhances the expressiveness and robustness of the concept space compared to using atomic concepts alone, while adhering to the same prompt budget.

\noindent\textbf{Step 4}.  Subset Selection via Determinantal Point Processes: To obtain a diverse and representative subset of concepts for class $Y_i$, each candidate composite concept in $\hat{C}_i$ is encoded using CLIP’s text encoder to produce normalized embeddings. These embeddings are used to construct a similarity kernel $K(\hat{C}_{i,j}, \hat{C}_{i,u}) = \phi(\hat{C}_{i,j})^\top \phi(\hat{C}_{i,u})$, where $\phi(\cdot)$ denotes the CLIP embedding function. The DPP~\cite{kulesza2012determinantal} is then applied to select a subset $\mathcal{C}_i \subseteq \hat{C}_i$ of size $M_i$ that efficiently minimizes semantic redundancy among concepts under a constrained budget.

Through our multi-stage, LLM-driven concept synthesis pipeline, we construct class-specific concept sets $\{\mathcal{C}_1, \dots, \mathcal{C}_K\}$, each defining a proposal distribution $q(C_i)$ for class $Y_i \in \mathcal{Y}$. By integrating discriminability via contrastive prompting, compositionality through atomic concept combination, and diversity using DPP-based subset selection, the resulting $q(C_i)$ provides an effective approximation of the actual concept prior $p(C_i)$.
\subsection{Adaptive Soft-Trim based Likelihood}
\label{sec:4.2}
Let $\mathcal{S}_i = \{S_{i,1}, \ldots, S_{i,M_i}\}$ denote the set of predicted similarities between a test image $X$ and prompts augmented with the concept set $\mathcal{C}_i$, corresponding to class $Y_i \in \mathcal{Y}$, where each $S_{i,j}$ approximates the conditional probability $p(Y_i| X, C_{i,j})$. Empirically, $\mathcal{S}_i$ often follows a long-tailed distribution, indicating the presence of outlier concepts. To enhance robustness in estimating $p(Y_i | X)$, we introduce an Adaptive Soft-Trim Likelihood, which down-weights concepts whose similarity scores deviate significantly from the median of $\mathcal{S}_i$, thereby mitigating outlier effects.

Since $p(Y_i | X)$ is a scalar, we focus on one-dimensional robust statistics. To formally characterize the presence of outliers, we adopt the univariate Huber Contamination Model \cite{huber1992robust} and the goodness condition  adapted  from \cite{diakonikolas2018robustly,diakonikolas2023near}.
\begin{definition}[Contamination Model] For each class $Y_i$, let $\rho_i \in [0, 0.5)$ denote the unknown concept noise rate. Let $S_{i,j}$ with mean $\mu_i$ and variance terms $\varepsilon_{i,j}$ be i.i.d.  sub-Gaussian random variables with variance proxy $\sigma^2_i$ and $\mathbb{E}[\varepsilon_{i,j}] = 0$. Let $\zeta_{i,j}$ represent arbitrary adversarial contamination. Then:
\begin{equation}
\small
s_{i,j} = \begin{cases}
\mu_i + \varepsilon_{i,j}, & \text{with probability } 1-\rho_i \\
\zeta_{i,j}, & \text{with probability } \rho_i.
\end{cases}
\end{equation}
\end{definition}
\noindent In more accessible terms, $\rho_i$ represents the fraction of outlier concepts in $\mathcal{C}_i$, while $1 - \rho_i$ corresponds to clean (uncontaminated) concepts. The parameter $\mu_i$ serves as the ideal estimator of $p(Y_i| X)$ in the presence of contamination.
\begin{definition}[{$(\sigma_i,\alpha)$-Goodness}] Let similarity scores $\mathcal S_i=\{S_{i,1},\ldots,S_{i,M_i}\}$ have clean mean $\mu_i$ and variance proxy $\sigma_i^2$. We say $\mathcal S_i$ is $(\sigma_i,\alpha)$-good w.r.t.\ $\mu_i$ if:

\begin{enumerate}[leftmargin=*,label=(\arabic*),topsep=0pt,itemsep=2pt]
  \item \textbf{(Median)}  
        There exists an absolute constant $C_1>0$ such that $    \Pr_{S_{i,j}\sim \mathcal{S}_i}\!\bigl[\,
        \bigl|S_{i,j}-\mu_i\bigr|\;\ge\; C_1\,\sigma_i\,\rho_i
    \bigr]\; <\; \tfrac12$.
  \item \textbf{(Mean / Variance under any weight vector)}  
       For any weight vector $\mathcal{W}_i = \{w_{i,1}, \dots, w_{i,M_i}\}$ with $w_{i,j} \in [0,1]$ satisfying the average mass constraint $\frac{1}{M_i} \sum_{j=1}^{M_i} w_{i,j} \ge 1 - \alpha$, define: the weighted mean  $ \hat{\mu}_{i}:=
            \frac{\sum_{j=1}^{M_i} w_{i,j}\,S_{i,j}}
                 {\sum_{j=1}^{M_i} w_{i,j}}$ and variance $  \hat{\sigma}_{i}^{2}:=
            \frac{\sum_{j=1}^{M_i} w_{i,j}\,(S_{i,j}-\hat{\mu}_{i})^{2}}
                 {\sum_{j=1}^{M_i} w_{i,j}}$.
        Then there exist universal constants $C_2,C_3>0$ such that 
        \begin{small}
        \begin{align}
           \tag{2.a}\label{eq:good-mean}
           | \hat{\mu}_{i}-\mu_i|
           &\;\le\; C_2\,\sigma_i\,\alpha\sqrt{\log\!\frac1\alpha}, \\[4pt]
           \tag{2.b}\label{eq:good-var}
           |\hat{\sigma}_{i}^{2}-\sigma_i^{2}|
           &\;\le\; C_3\,\sigma_i^{2}\,\alpha\log\!\frac1\alpha .
        \end{align}
        \end{small}
\end{enumerate}
\end{definition}
\noindent These conditions ensure that the sample median is a robust estimator of the true mean, and that weighted estimates of mean and variance remain accurate even when a fraction $\alpha$ of the data is down-weighted or removed.

Building on the above, we first compute the median $m_i$ of $\mathcal{S}_i$ as an initial robust estimate of $\mu_i$ ($p(Y_i|X)$). To quantify the variability of similarity scores around $m_i$, we compute the Median Absolute Deviation (MAD):
\begin{equation}
\small
\text{MAD}_i = \text{median}\big(|S_{i,j} - m_i|\big).
\end{equation}
We estimate the contamination rate $\hat{\rho}_i$ as the fraction of samples that deviate significantly from the median:
\begin{equation}
\small
  \hat{\rho}_i = \frac{1}{M_i} \sum_{j=1}^{M_i} \mathbb{I}\left[\, \left| S_{i,j} - m_i \right| > \lambda \cdot \text{MAD}_i \,\right],
\end{equation}
where $\mathbb{I}$ denotes the indicator function that returns 1 if the condition inside is true and 0 otherwise. The parameter $\lambda$ is a threshold controlling outlier detection sensitivity, i.e., larger values make the criterion stricter, resulting in more samples being considered outliers. Through interpreting $\hat{\rho}_i$ as the prior that a concept $C_{i,j}$ is an outlier and the normalized deviation $\frac{|S_{i,j} - m_i|}{\text{MAD}_i}$ as observed evidence (i.e., how far a sample deviates from the center), the posterior  that $C_{i,j}$ is clean (non-outlier) is modeled using a logistic form. Specifically, the weight $w_{i,j}$ assigned to concept  $C_{i,j}$ is given by:
\begin{equation}
\label{wij}
\small
w_{i,j} = \sigma\left( -\log\left( \frac{1 - \hat{\rho}_i}{\hat{\rho}_i} \right) \cdot \frac{|S_{i,j} - m_i|\cdot k}{\text{MAD}_i\cdot} \right),
\end{equation}
where $\sigma(\cdot)$ denotes the sigmoid function and $k$ represents its slope. The resulting weight $w_{i,j}$ represents a soft estimation of concept reliability, thus serving as  likelihood $p(C_{i,j}|X)$ to mitigate the negative effect of outlier concept. We then compute a robust estimate of $p(Y_i|X)$ via adaptive marginalization over $C_i$:
\begin{equation}
\small
      P(Y_i|X)=\hat{\mu}=\sum_{j}^{M_i}\frac{w_{i,j}}{\sum_{k=1}^{M_i}}P(Y_i|X,C_{i,j}).
\end{equation}

Finally, following  Eq. \eqref{eq:1} in the standard zero-shot image classification, we  predict the label $\hat{Y}$ by selecting the class with the highest estimated likelihood.
\subsection{Theoretical Analysis}
\label{sec:4.3}
\begin{theorem}[Robust Guarantee]\label{thm:robust_mean}
If the size of concept set $\mathcal{C}_i$ for each class $Y_i$ satisfies $M_i \geq \frac{C_0 \log(1/\delta)}{\rho_i^2}$
for a universal constant $C_0$, then with probability at least $1-\delta$, the estimation error of $\hat{\mu}_{i}$ is bounded by:
\begin{equation}
\small
|\hat{\mu}_{i} - \mu_i| \leq C_1\sigma_i\rho_i + C_2\sigma_i\sqrt{\frac{\log(1/\delta)}{M_i}} + \frac{C_3\sigma_i}{k}
\end{equation}
where $C_1$, $C_2$, and $C_3$ are universal constants and $k$ is sigmoid slope parameter in Eq. \eqref{wij}.
\end{theorem}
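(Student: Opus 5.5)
We would prove Theorem~\ref{thm:robust_mean} by a three-term decomposition that mirrors the three terms of the bound:
\[
|\hat{\mu}_i-\mu_i| \le |m_i-\mu_i| + |\hat{\mu}_i - \bar{\mu}_i^{\rm clean}| + |\bar{\mu}_i^{\rm clean}-\mu_i| .
\]
Here $m_i$ is the median, and $\bar{\mu}_i^{\rm clean}$ is the weighted mean of the inlier scores only. The decomposition is heuristic and will be adjusted so that each piece produces exactly one term.

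\textbf{Step 1: the median.} We first control the median $m_i$. The sample size condition $M_i \ge C_0\log(1/\delta)/\rho_i^2$ is used here. By a Hoeffding/Chernoff bound on the indicator counts, the empirical fraction of points with $|S_{i,j}-\mu_i| \ge C_1\sigma_i\rho_i$ deviates from its population value by at most $O(\sqrt{\log(1/\delta)/M_i}) \le O(\rho_i)$. The same holds for the empirical outlier fraction. So with probability $1-\delta/3$ the empirical fraction is still below $1/2$, and the (Median) part of $(\sigma_i,\alpha)$-goodness gives $|m_i-\mu_i|\le C_1\sigma_i\rho_i$. The same sub-Gaussian argument on the clean points shows $\mathrm{MAD}_i \asymp \sigma_i$ up to constants. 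We need this to relate the normalized deviation in Eq.~\eqref{wij} to a deviation measured in units of $\sigma_i$.

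\textbf{Step 2: the clean weighted mean.} Next we treat the weights $w_{i,j}\in[0,1]$ as a weight vector in Definition ($(\sigma_i,\alpha)$-Goodness), part (2). We check the average-mass constraint $\frac1{M_i}\sum_j w_{i,j}\ge 1-\alpha$ with $\alpha = O(\rho_i)$. Since $\hat{\rho}_i$ concentrates around a constant multiple of $\rho_i$, the logistic coefficient $\log((1-\hat\rho_i)/\hat\rho_i)$ is bounded away from $0$. Clean points lie within $O(\sigma_i\sqrt{\log(1/\rho_i)})$ of $m_i$ except for a fraction $O(\rho_i)$, so their weights are bounded below. Condition~\eqref{eq:good-mean} then controls the weighted mean of the clean sample. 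Separately, a Hoeffding/sub-Gaussian concentration of the weighted average of $\varepsilon_{i,j}$ around $0$ contributes the statistical term $C_2\sigma_i\sqrt{\log(1/\delta)/M_i}$; the $\alpha\sqrt{\log(1/\alpha)}$ piece is absorbed into the $\rho_i$ term after enlarging $C_1$. A union bound over the three events gives overall probability $1-\delta$.

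\textbf{Step 3: the outliers (main obstacle).} The contaminated points $\zeta_{i,j}$ are arbitrary. The key is that a point at deviation $t=|S_{i,j}-m_i|$ gets weight $w_{i,j}\le \sigma(-c\,k\,t/\mathrm{MAD}_i)$. Its contribution $w_{i,j}|S_{i,j}-m_i|$ is therefore at most
\[
\sup_{t\ge0} t\,\sigma(-ckt/\mathrm{MAD}_i) = O(\mathrm{MAD}_i/k) = O(\sigma_i/k),
\]
using the elementary bound $\sup_u u\,e^{-u}\le 1/e$. Averaging over outliers, and dividing by the total weight, which is $\ge (1-\alpha)M_i$ by Step 2, yields the term $C_3\sigma_i/k$ after writing $\hat{\mu}_i - m_i$ as a weighted average of deviations.

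The delicate points all sit in Step 3: uniformly lower-bounding the coefficient $\log((1-\hat\rho_i)/\hat\rho_i)$, handling the case where $\hat\rho_i$ is near $1/2$ (where the coefficient vanishes), and ensuring $\mathrm{MAD}_i$ is not inflated by the adversary. For the first two we would try restricting to $\rho_i$ bounded away from $1/2$ and absorbing the resulting constant into $C_3$. For the third we would use that $\mathrm{MAD}_i$ is itself a median, so it is robust for $\rho_i<1/2$.
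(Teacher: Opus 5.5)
The step that fails is the mass bound used in both Step~2 and Step~3: that the soft-trim weights satisfy $\frac{1}{M_i}\sum_j w_{i,j}\ge 1-\alpha$ with $\alpha=O(\rho_i)$, so that $W=\sum_j w_{i,j}\ge(1-\alpha)M_i$.

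\textbf{Why the mass bound fails.} Write $c=\log((1-\hat\rho_i)/\hat\rho_i)>0$. Then Eq.~\eqref{wij} reads $w_{i,j}=\sigma(-c\,k\,|S_{i,j}-m_i|/\mathrm{MAD}_i)$, so every weight is at most $\sigma(0)=\tfrac12$. By the definition of $\mathrm{MAD}_i$, at least half of the scores satisfy $|S_{i,j}-m_i|\ge\mathrm{MAD}_i$, so at least half of the weights are at most $\sigma(-ck)$. Doubling the weights leaves $\hat\mu_i$ unchanged and keeps them in $[0,1]$, but even then the average mass is at most $\tfrac12+\sigma(-ck)$. The constraint in part (2) of goodness therefore forces $\alpha\ge\tfrac12\tanh(ck/2)$. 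Consequences:
\begin{itemize}
\item $\alpha=O(\rho_i)$ is possible only when $ck=O(\rho_i)$. In that regime the right-hand side of the theorem is already at least of order $c\,\sigma_i/\rho_i$.
\item For moderate or large $k$ (the paper uses $k=e^{4.6}$), condition \eqref{eq:good-mean} yields only an $O(\sigma_i)$ bound.
\end{itemize}

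\textbf{Effect on Step~3.} For large $k$, only scores within $O(\mathrm{MAD}_i/(ck))$ of $m_i$ carry non-negligible weight. So $W$ is typically of order $M_i/k$, and it can be of order one: for odd $M_i$ the median sample has weight $\tfrac12$, and when $k\gg M_i$ the remaining weights are typically exponentially small. Your outlier sum is $\rho_iM_i\cdot O(\mathrm{MAD}_i/k)$. Dividing it by such a $W$ gives either a $\rho_i$-type term that needs a density lower bound near the median, or $O(\rho_iM_i\sigma_i/k)$. It does not give $C_3\sigma_i/k$.

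A repair would have to lower-bound $W$ by the weight of the points nearest $m_i$ and compare each outlier's weight with it, e.g.\ via $\sigma(-x-y)\le 2e^{-y}\sigma(-x)$ for $x,y\ge0$. To reach the stated form, this in general needs regularity assumptions on the clean scores near their median that are not among the hypotheses.

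The paper's own proof does not supply this step either. Its statistical-error paragraph likewise assumes $W\approx(1-\rho_i)M_i$ and applies Hoeffding as if the weights were fixed. Its $C_3\sigma_i/k$ term comes from a transition-width heuristic. Your $\sup_t t\,\sigma(-ckt/\mathrm{MAD}_i)$ estimate is the more concrete route to a $1/k$ term, but only once $W$ is controlled.

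\textbf{The clean term.} As $k\to\infty$ with the sample fixed, the weights concentrate at $m_i$ and $\hat\mu_i$ tends essentially to the sample median. So $\bar{\mu}_i^{\mathrm{clean}}$ tracks the median of the clean scores, not $\mu_i$. Sub-Gaussianity alone cannot close this gap. Take a skewed bounded noise law with $\zeta_{i,j}$ drawn from the clean law and $\rho_i=\sqrt{C_0\log(1/\delta)/M_i}$, and let $M_i,k\to\infty$. The error then stays a fixed multiple of $\sigma_i$ while the right-hand side tends to $0$. Relatedly, your separate Hoeffding step ignores that $w_{i,j}$ depends on $\varepsilon_{i,j}$ through $|S_{i,j}-m_i|$, so in general $\mathbb{E}[w_{i,j}\varepsilon_{i,j}]\neq 0$.

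Some symmetry assumption, or the (Median) condition, is therefore indispensable. But (Median) says more than half of the scores lie within $C_1\sigma_i\rho_i$ of $\mu_i$. That forces $\mathrm{MAD}_i<2C_1\sigma_i\rho_i$, which for small $\rho_i$ contradicts the scaling ``$\mathrm{MAD}_i$ of order $\sigma_i$'' that Step~2 uses to lower-bound the clean weights. Sub-Gaussianity only ever gives an upper bound on $\mathrm{MAD}_i$.

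So Steps~1 and~2 rest on incompatible scalings. The argument has to be rebuilt around the localized behaviour of the weights rather than the mass-constraint form of goodness.
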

\begin{corollary}[Multi-class excess risk]\label{cor:excess_risk}
Under the condition in Theorem~\ref{thm:robust_mean}, the classifier for image $X$ in Eq. \eqref{eq:1} satisfies:
\begin{equation}
\small
\mathcal{R} - \mathcal{R}^* \leq \Pr\left[\text{margin}_{\mu}(X) \leq 2 \cdot \max_i |\hat{\mu}_{i}(X) - \mu_i(X)|\right]
\end{equation}
where $\mathcal{R}$ is the risk of our classifier, $\mathcal{R}^*$ is the optimal Bayes risk, and $\text{margin}_{\mu}(X)$ is the margin of the true mean $\mu_i(X)$. Consequently, the excess risk is bounded by:
\begin{small}
\begin{align}
\mathcal{R} - \mathcal{R}^* 
&\leq \Pr\Big[\text{margin}_{\mu}(X) \leq 2\big(C_1\sigma_{\max}\rho_{\max} 
+ \\ &C_2\sigma_{\max}\sqrt{\tfrac{\log(K/\delta)}{M_{\min}}} \notag 
+ \tfrac{C_3\sigma_{\max}}{k}\big)\Big]
\end{align}
\end{small}where $K$ is the number of classes, $\sigma_{\max} = \max_i \sigma_i$, $\rho_{\max} = \max_i \rho_i$, and $M_{\min} = \min_i M_i$.
\end{corollary}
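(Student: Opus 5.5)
The statement has two parts. The first inequality is a deterministic plug-in comparison argument. The second follows by substituting the bound of Theorem~\ref{thm:robust_mean} after a union bound over the $K$ classes.

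\textbf{Step 1 (deterministic comparison).} Fix $X$. Let $i^*=\arg\max_i \mu_i(X)$ be the Bayes-optimal prediction. Let $\text{margin}_\mu(X)=\mu_{i^*}(X)-\max_{j\neq i^*}\mu_j(X)$ be the gap to the runner-up, and write $\Delta(X)=\max_i|\hat\mu_i(X)-\mu_i(X)|$.

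\begin{itemize}
\item Suppose our classifier $\hat Y=\arg\max_i\hat\mu_i(X)$ picks some $j\neq i^*$. Then $\hat\mu_j\ge\hat\mu_{i^*}$, hence $\mu_j+\Delta\ge\hat\mu_j\ge\hat\mu_{i^*}\ge\mu_{i^*}-\Delta$. This gives $\text{margin}_\mu(X)\le\mu_{i^*}-\mu_j\le2\Delta(X)$.
\item The conditional excess risk at $X$ equals $\mu_{i^*}-\mu_{\hat Y}$, read as the conditional class probabilities. It is always at most $1$, since these are probabilities normalized to $[0,1]$, and it vanishes whenever $\hat Y=i^*$.
\item Therefore $\mathcal R-\mathcal R^*=\mathbb E_X[\mu_{i^*}-\mu_{\hat Y}]\le\mathbb E_X[\mathbb I\{\hat Y\neq i^*\}]\le\Pr[\text{margin}_\mu(X)\le2\Delta(X)]$.
\end{itemize}

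This is the standard plug-in classifier argument. It yields the first displayed inequality, with the probability taken jointly over $X$ and the concept-score randomness.

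\textbf{Step 2 (uniform control over classes).} Apply Theorem~\ref{thm:robust_mean} to each class $i$ with confidence $\delta/K$ in place of $\delta$. This requires $M_i\ge C_0\log(K/\delta)/\rho_i^2$, which I take as the intended reading of ``under the condition in Theorem~\ref{thm:robust_mean}'' with $\delta$ replaced by $\delta/K$. A union bound then gives, with probability at least $1-\delta$ and simultaneously for all $i$,
\[
|\hat\mu_i-\mu_i|\le C_1\sigma_i\rho_i+C_2\sigma_i\sqrt{\log(K/\delta)/M_i}+C_3\sigma_i/k .
\]
Each term is monotone in the parameters, so bounding $\sigma_i\le\sigma_{\max}$, $\rho_i\le\rho_{\max}$ and $M_i\ge M_{\min}$ gives $\Delta(X)\le B$, where $B$ is the parenthesized quantity in the corollary.

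\textbf{Step 3 (combining).} On the good event, $\{\text{margin}_\mu\le2\Delta\}\subseteq\{\text{margin}_\mu\le2B\}$, so monotonicity of probability gives the second bound. Strictly, the bad event contributes an extra additive $\delta$. I would either state the bound as holding with probability $1-\delta$ over the concept sampling, conditional on the good event, which matches the corollary's phrasing, or add $+\delta$ and remark that it is absorbed.

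\textbf{Main obstacle.} The delicate point is this bookkeeping of the failure probability and of which randomness the outer $\Pr$ ranges over, since $\hat\mu_i$ depends on $X$. I would handle it by conditioning on the concept-score sample and treating the $1-\delta$ event as fixed before taking probability over $X$. A second subtlety is equating the conditional risk gap with the $\mu_i$, which I would make explicit as the modeling assumption that $\mu_i(X)=p(Y_i\mid X)$.
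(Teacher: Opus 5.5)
Your proposal is correct and follows the paper's proof essentially step for step. Both use the same deterministic comparison, $\{\hat h(X)\neq h^*(X)\}\subseteq\{\text{margin}_\mu(X)\le 2\max_i|\hat\mu_i(X)-\mu_i(X)|\}$, with $\mathcal{R}-\mathcal{R}^*\le\Pr[\hat h(X)\neq h^*(X)]$, then Theorem~\ref{thm:robust_mean} at level $\delta/K$, a union bound, and the monotone replacement by $\sigma_{\max},\rho_{\max},M_{\min}$. Your two refinements are points the paper glosses over: the sample-size condition must really read $M_i\ge C_0\log(K/\delta)/\rho_i^2$, and the failure event must be counted as an explicit additive $+\delta$ rather than absorbed or conditioned away, because Theorem~\ref{thm:robust_mean} controls the error only for each fixed $X$ and its good event cannot be frozen before taking probability over $X$.
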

\noindent The complete proof is provided in the Appendix.

\section{Experiments}
\label{sec:exres}
\subsection{Datasets}
We primarily assess our framework on eleven classification datasets, including automobiles (Cars~\citep{krause20133d}), textures (DTD~\citep{cimpoi2014describing}), human actions (UCF101~\citep{soomro2012ucf101}), aircraft types (Aircraft~\citep{maji2013fine}), satellite imagery (EuroSAT~\citep{helber2019eurosat}), pet breeds (Pets~\citep{parkhi2012cats}), flowers (Flower102~\citep{nilsback2008automated}), food items (Food101~\citep{food101}), scenes (SUN397~\citep{sun397}), and general objects (Caltech101~\citep{fei2004learning} and ImageNet~\citep{deng2009imagenet}). The statistics of all datasets are shown in the Appendix.
\subsection{Baselines} 
We compare our concept-guided Bayesian classification (CGBC) framework against six zero-shot image recognition baselines, involving \textit{CLIP}~\citep{clip}, CLIP + E~\citep{ 
clip}, TPT~\citep{shu2022tpt}, MTA~\citep{matcvpr24}, COLA~\citep{COLA}, C-TPT~\citep{C-tpt}, DiffTPT~\citep{DiffTPT}, ZERO~\citep{zero} and Cupl~\citep{pratt2023does}.  These baselines differ in their choice of prompt templates and test-time augmentation techniques. Additionally, we also compare with CGBC Prior by excluding the likelihood of our framework and averaging the prediction of all concept-enhanced prompts. The details of all baselines are shown in the Appendix.
\subsection{Implementation details}
For concept synthesis, we use GPT-4.1 Turbo to generate 10 concepts per API call. For each class $Y_i$, we generate 50 atomic concepts ($M_A = 50$) and randomly sample 500 composite combinations ($|\hat{\mathcal{C}}_i| = 500$) from the atomic set with 3 atomic concepts per combination. The final compositional concept set $\mathcal{C}_i$ is selected using DPP algorithm with a size of either 16 or 50 ($M_i = 16$ or $50$) in our main experiments. In the Adaptive Soft-Trim Likelihood, we set the outlier threshold as $\lambda = 2.5$ and sigmoid slope as the logit scale of CLIP $k = e^{4.6}$. All baselines are evaluated using their default hyperparameters as reported in their respective papers. Unless otherwise specified, we use CLIP with a ViT-B/16 backbone as the primary VLM. To mitigate the effect of randomness, we report the average top-1 accuracy over three random seeds for all main experiments.
\subsection{Zero-shot Performance Analysis}
\begin{table*}[t!]
\caption{Performance of zero-shot methods on eleven image classification datasets. We highlight the best and second best results by {\bf bolding} and \underline{underlining} them, respectively. In the \textit{Auxiliary} column, the first number indicates the number of views, and the second denotes the number of prompts used by the corresponding method.}
\label{tab:main}
\centering
\begin{adjustbox}{width=\linewidth, center}
\begin{tabular}{cccccccccccccc}
\toprule
Method & SUN397 & Aircraft & EuroSAT & Cars & Food101 & Pets &  Flower102 & Caltech101 & DTD & UCF101 & ImageNet & \textbf{Avg.} & \textbf{Auxiliary}
\\ \midrule 
\textit{CLIP}~\citep{clip} & 62.3 & 23.9 & 42.2 & 65.5 & 82.2 & 87.2 & 67.4 & 92.2 & 44.4 & 64.3 & 66.7 &63.5 & (1, 1) \\
CLIP + E.~\citep{clip} & 65.1 & 23.7 & 47.7 & 66.3 & 82.3 & 87.1 & 66.1 & 93.1 & 44.0 & 65.0 & 68.4  &64.4 & (1, 80)\\
\midrule
TPT~\citep{shu2022tpt} & 65.4 & 23.1 & 42.9 & 66.4 & 84.6 & 87.2 & 68.9 & 94.1 & 47.0 & 68.0 & 68.9 & 65.1& (64, 1)\\
MTA~\citep{matcvpr24}  & 65.0 & 25.3 & 38.7 & \textbf{68.1} & 85.0 & 88.2 & 68.3 & 94.1 & 45.6 & 68.1 & 68.1 & 64.9 &  (64, 1) \\
COLA~\citep{COLA}  & 61.9 & 20.9 & 53.8 & 54.2 & 84.5 & 87.9 & 66.1 & 88.1 & 41.0 & 67.1 & 62.8& 62.6 & (5, 50)\\
C-TPT~\citep{C-tpt}            & 64.8 & 24.0 & 43.2 & 65.8 & 83.7 & 88.2 & 69.8 & 93.6 & 46.0 & 65.7 & 68.5 & 64.9 &(64, 4) \\
DiffTPT~\citep{DiffTPT}         & 65.7 & 25.6 & 43.1 & 67.0 & \textbf{87.2} & 88.2 & 70.1 & 92.5 & 47.0 & 68.2 & \textbf{70.3} & 65.9 & (64, 1)\\
ZERO~\citep{zero}          & 65.0 & 25.2 & 34.3 & 68.0 & \underline{86.5} & 87.8 & 67.7 & 93.7 & 46.1 & 67.8 & 69.3 & 64.7 & (64, 1) \\
\midrule
\multirow{2}{*}{Cupl~\citep{pratt2023does}}& 67.1 & \underline{27.8} & 53.7 & 64.9 & 82.5 & 89.0 & \underline{74.2} & 93.5 & 51.6 & 70.8 & 66.8 & 67.4  & (1, 16) \\
& 67.8 & \textbf{27.9} & 55.3 & 65.4 & 83.1 & 88.9 & \textbf{74.5} & 93.7 & 52.3 & 71.2 & 66.5 & 67.9 & (1, 50) \\
\midrule
CGBC Prior & 68.2 ± 0.1 & 26.0 ± 0.2 & 59.0 ± 0.8 & 66.0 ± 0.2 & 83.5 ± 0.1 & 88.8 ± 0.1 & 71.7 ± 0.2 & 94.2 ± 0.1 & 54.6 ± 0.4 & 71.6 ± 0.3 & 68.2 ± 0.0 & 68.4 ± 0.2 & (1, 16) \\
\rowcolor{lavender1}
\textbf{CGBC} 
 & \underline{68.5} ± 0.0 & 25.9 ± 0.1 & \underline{60.1} ± 0.5 & 66.2 ± 0.1 & 83.5 ± 0.0 & \underline{90.2} ± 0.5 & 73.4 ± 0.1 & \underline{94.3} ± 0.0 & \underline{55.4} ± 0.3 & \underline{71.8} ± 0.3 & 69.2 ± 0.1 & \underline{69.0 }± 0.2 & (1, 16) \\ 
CGBC Prior  & \underline{68.5} ± 0.1 & 26.1 ± 0.2 & 59.3 ± 0.2 & \underline{66.6} ± 0.2 & 83.8 ± 0.0 & 88.5 ± 0.1 & 72.4 ± 0.1 & 94.0 ± 0.1 & 55.1 ± 0.2 & 71.8 ± 0.3 & 68.3 ± 0.1 & 68.6 ± 0.2 & (1, 50) \\
 \rowcolor{lavender1} \textbf{CGBC} 
 & \textbf{68.8} ± 0.7 & 26.0 ± 0.1 & \textbf{60.3} ± 0.4 & \underline{66.6} ± 0.2 & 83.9 ± 0.9 & \textbf{90.7} ± 0.1 & 73.7 ± 0.2 & \textbf{94.4} ± 0.8 & \textbf{55.7} ± 0.4 & \textbf{72.3} ± 0.3 & \underline{69.4} ± 0.0 & \textbf{69.3} ± 0.2 & (1, 50) \\
\bottomrule
\end{tabular}
\end{adjustbox}
\vspace{-15pt}
\end{table*}
\begin{table}[htbp]
\centering
\caption{Averaged top-1 accuracy (\%) on 11 datasets across different VLM architectures. Hyper-parameters are kept constant across all experiments. The best results are bold for each VLM variant.}
\label{tab:diffvlm}
\begin{adjustbox}{max width=\linewidth}
\begin{tabular}{ccccc>{\columncolor{lavender1}}c}
\toprule
VLM Variant & \textit{CLIP} & CLIP + E & MTA~\citep{matcvpr24} & CGBC Prior & \textbf{CGBC} \\
\midrule
RN50     & 55.11 & 56.12 &        57.24       & 60.39 ± 0.19 & \textbf{60.77} ± 0.19 \\
RN101    & 57.38 & 58.43 &         59.34      & 59.78 ± 0.19 & \textbf{61.46} ± 0.19 \\
RN50x4   & 61.21 & 61.78 &        63.28       & 63.60 ± 0.19 & \textbf{64.64} ± 0.19 \\
RN50x16  & 63.69 & 66.00 &         65.38       & 66.46 ± 0.19 & \textbf{66.95} ± 0.19 \\
ViT-B/16 & 63.49 & 64.43 & 64.95         & 68.56 ± 0.18 & \textbf{69.25} ± 0.19 \\
ViT-B/32 & 59.28 & 60.39 &     61.66           & 63.90 ± 0.18 & \textbf{64.39} ± 0.19 \\
ViT-L/14 & 70.58 & 71.36 &          71.71     & 74.39 ± 0.16 & \textbf{75.07} ± 0.16 \\
\bottomrule
\end{tabular}
\end{adjustbox}
\vspace{-15pt}
\end{table}
We compare the zero-shot performance of our CGBC framework with various baselines in Table~\ref{tab:main}, yielding the following key observations: First, prompt-based methods leveraging visual concepts (e.g., CuPL, CGBC Prior) consistently outperform image-augmented baselines (e.g., TPT, MTA) by over 3\% on average across eleven datasets. In contrast, simple prompt ensembling (i.e., CLIP+E) yields marginal improvements, suggesting that LLM-generated class-specific descriptions more effectively activate the internal knowledge of VLMs than class labels alone. Second, our concept synthesis pipeline (CGBC Prior) outperforms CuPL by producing more diverse and recognition-aligned prompts, particularly when using 16 prompts. Empirical analysis shows that CuPL often over-conjoins concepts (typically more than four) using conjunctions (e.g., “and”), which reduces robustness compared to our use of disjunction (“or”). Third, incorporating the adaptive soft-trim likelihood, CGBC further enhances performance by reducing the impact of outlier concepts, underscoring the benefit of refining the prior based on the test image.

We further evaluate the versatility and generalizability of CGBC Prior and CGBC across multiple CLIP variants in Table~\ref{tab:diffvlm}, covering a range of visual encoders and model scales. Both CGBC and its concept prior consistently outperform all three baselines, demonstrating the effectiveness of the concept synthesis pipeline and the adaptive soft-trim likelihood. Specifically, CGBC achieves an average top-1 accuracy gain of ~5\% with ViT-based encoders and ~3\% with RNN-based architectures. Importantly, these improvements persist across increasing VLM capacities, suggesting that larger models more effectively leverage the rich visual concepts synthesized by LLMs. These results highlight CGBC’s potential as a robust, concept-driven framework for a zero-shot recognition paradigm.
\subsection{Concept Proposal Distribution Analysis}
\label{sec:5.5}
\begin{figure}[htbp]
    \centering
        \centering
\includegraphics[width=\linewidth]{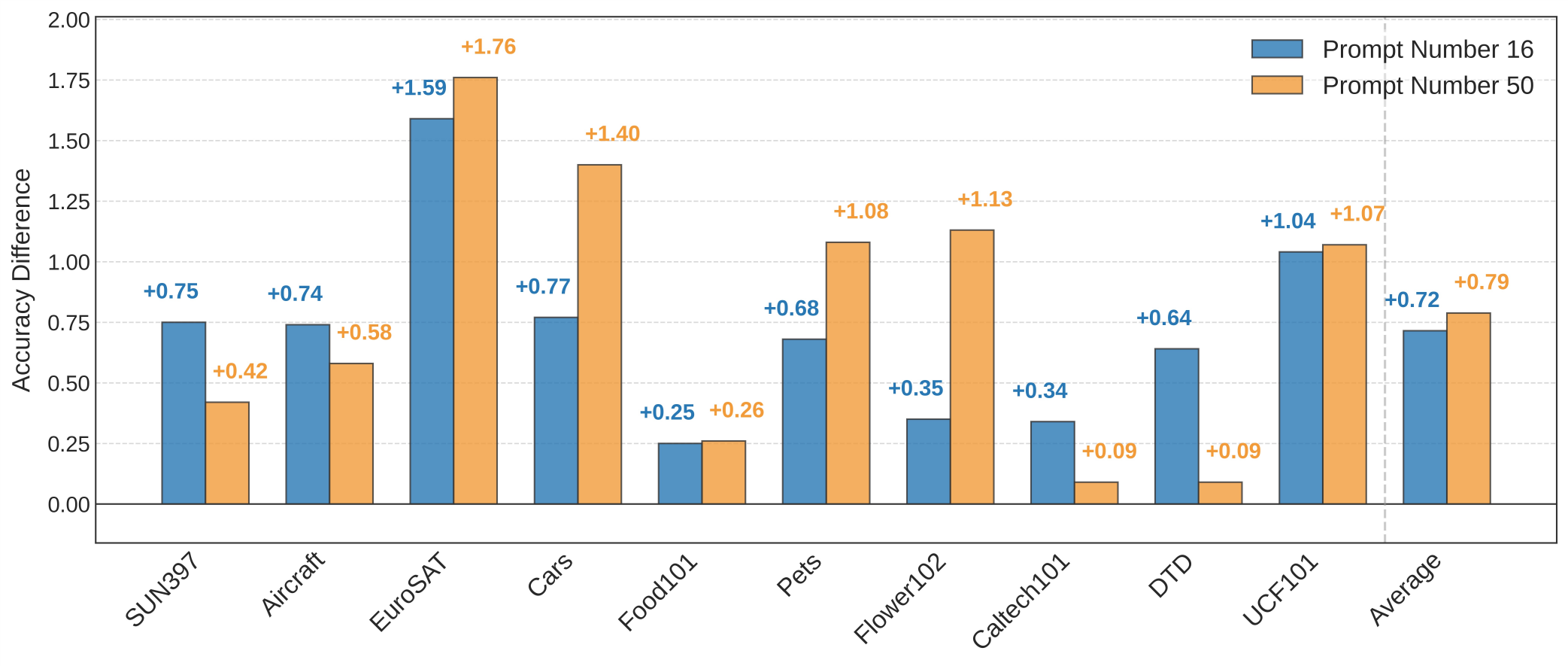}
    \caption{Top-1 Accuracy difference between discriminative (CGBC Prior) and descriptive concepts on ten datasets.} 
    \label{fig:disdes}
    \vspace{-10pt}
\end{figure}
\begin{figure}[htbp]
    \centering
        \centering
\includegraphics[width=\linewidth]{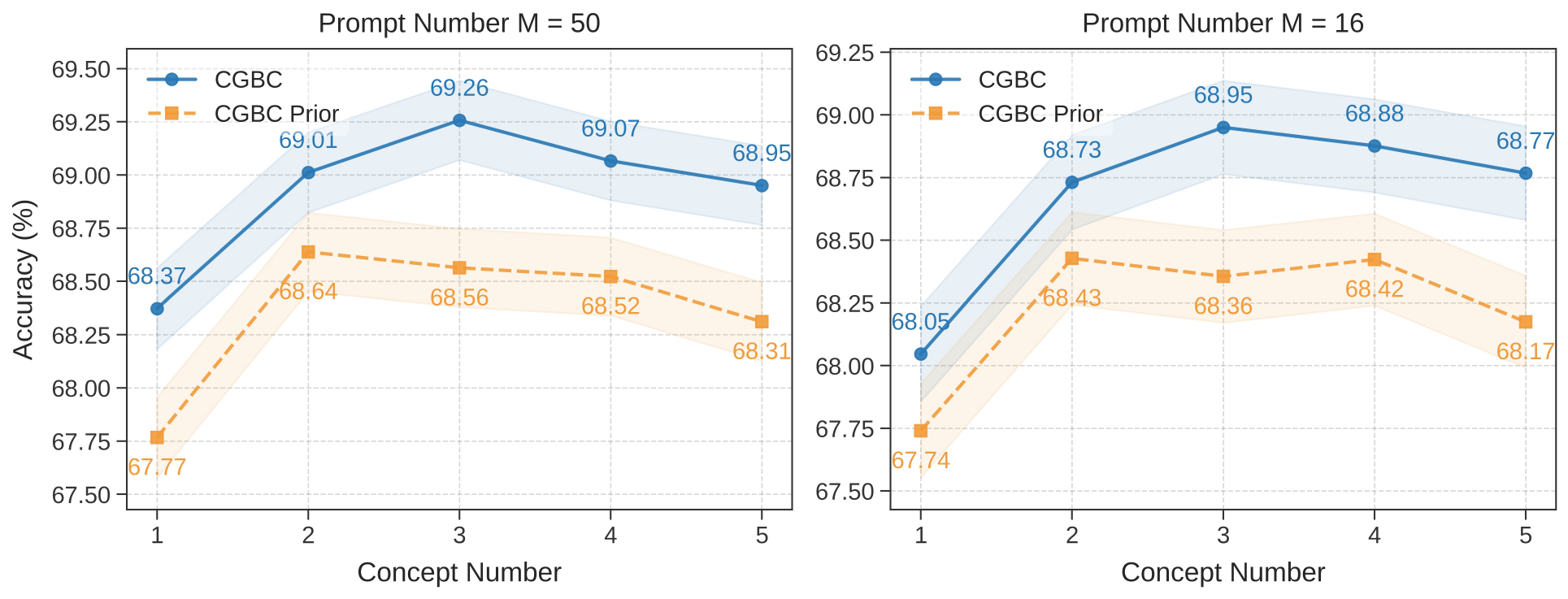}
    \caption{Averaged top-1 accuracy (\%) on eleven datasets using different numbers of concepts per prompt.} 
    \label{fig:com}
\end{figure}
\begin{figure}[htbp]
    \centering
        \centering
\includegraphics[width=\linewidth]{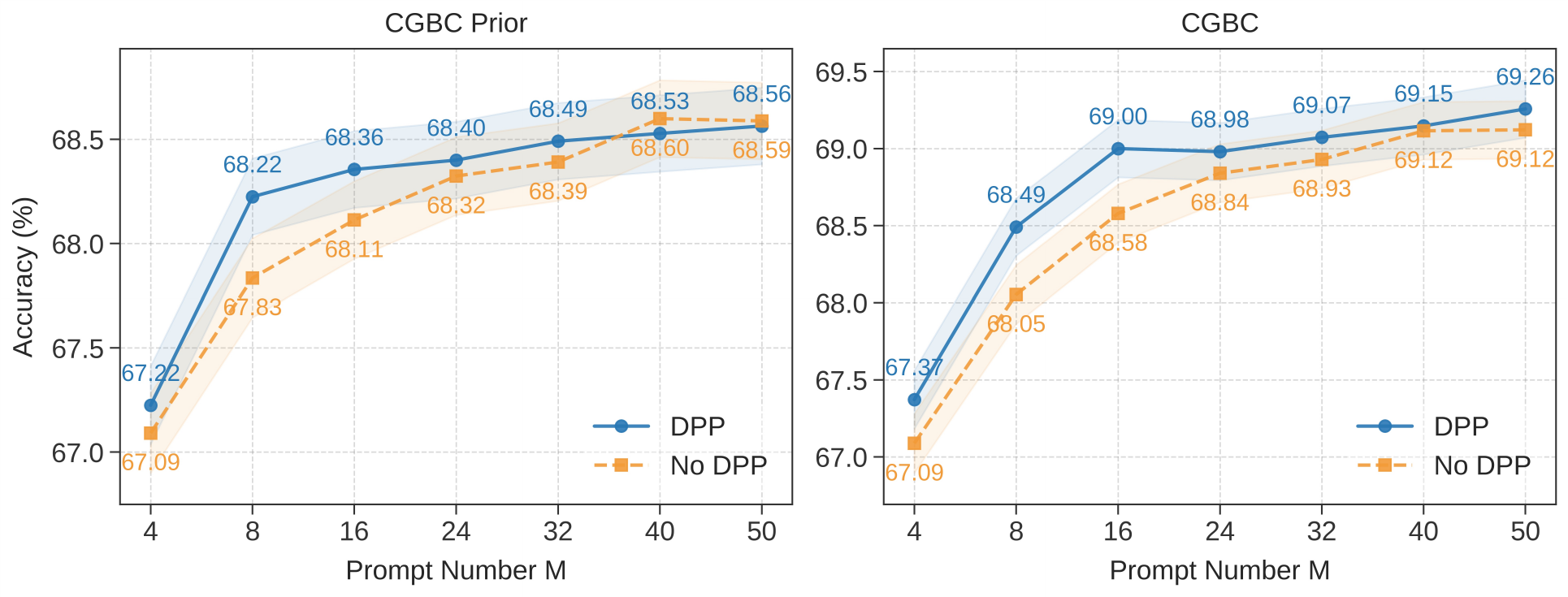}
    \caption{Averaged top-1 accuracy (\%) on eleven datasets with and without DPP under different prompt budgets.} 
    \label{fig:3}
\end{figure}
We empirically validate the importance of modeling the concept proposal distribution using  discriminability, compositionality, and diversity. To assess discriminability, we compare our LLM-driven concept synthesis pipeline with a descriptive-only variant that removes the neighborhood construction stage and replaces contrastive prompts with generic class descriptions (e.g., “Please generate concepts to describe the target class”), keeping all other components unchanged. As shown in Fig.\ref{fig:disdes}, concepts sampled from the discriminative proposal distribution consistently outperform the descriptive baseline, particularly as the number of prompts increases from 16 to 50, highlighting the alignment between discriminative prompting and classification objectives. For compositionality, we vary the number of concepts per prompt during the compositional concept construction stage under two prompt budget settings in Fig. \ref {fig:com}. Performance across eleven datasets initially increases with more concepts per prompt but declines beyond a certain point, peaking at three concepts for CGBC and two for CGBC Prior, indicating that moderate compositionality enhances expressiveness, whereas excessive composition introduces noise. To evaluate diversity, we compare CGBC and CGBC Prior with and without DPP-based sampling in Fig.~\ref{fig:3}. DPP yields greater gains when the number of prompts is small, as diversity helps form a more expressive concept prior under limited budgets. Moreover, all CGBC variants exhibit consistent performance improvements as the number of prompts increases, though gains plateau beyond 24 prompts. This indicates that only a limited number of diverse prompts are sufficient to achieve strong performance in practice, even though Theorem\ref{thm:robust_mean} suggests a larger theoretical minimum. In addition to these quantitative findings, qualitative examples in the appendix further illustrate the necessity of these three aspects in constructing effective concept proposal distributions.
\subsection{Ablation Study}
\begin{table}[htbp]
\centering
\caption{Impact of different LLMs (GPT-4.1 Series) and the size of hard-negative neighborhood set $|L_i|$ in concept synthesis pipeline on the performance of CGBC. \textbf{Avg} is the averaged top-1 accuracy across all five datasets. \textbf{Avg$^\dagger$} is the average excluding EuroSAT.}
\begin{adjustbox}{max width=\linewidth}
\begin{tabular}{ccccccccc}
\toprule
LLMs &  $|L_i|$  & EuroSAT  & DTD  & Aircraft  & Pets  & UCF101 & \textbf{Avg.} & \textbf{Avg.}$^\dagger$ \\
\midrule
\rowcolor{lavender1} Turbo   & 10 & 60.30  & 55.69 & 25.99 & 90.68 & 72.28 & \textbf{60.99} & \textbf{61.16} \\
Mini & 10 & 57.29 & 55.28 & 26.28 & 90.65 & 71.36 & 60.17 & 60.89 \\
Nano & 10 & 58.30 & 54.53 & 24.95 & 90.80 & 70.81 & 59.88 & 60.27 \\
Mini & 5  & 56.30 & 55.56 & 26.11 & 89.70 & 71.55 & 59.84 & 60.73 \\
Mini & 20 & 57.29 & 55.89 & 26.33 & 90.71 & 71.40 & 60.32 & 61.08 \\
Mini & 30 & 57.29 & 55.16 & 26.21 & 90.47 & 71.06 & 60.04 & 60.73 \\
\bottomrule
\end{tabular}
\end{adjustbox}
\vspace{-15pt}
\end{table}
\begin{table}[htbp]
\centering
\caption{Averaged top-1 accuracy (\%) on eleven datasets across different likelihood functions. The best results are bold for each prompt number.}
\label{tab:diffmean}
\begin{adjustbox}{max width=\linewidth}
\begin{tabular}{
c 
c 
>{\columncolor{lavender1}}c 
>{\columncolor{lavender1}}c 
>{\columncolor{lavender1}}c 
>{\columncolor{lavender1}}c 
>{\columncolor{lavender1}}c 
c 
}
\toprule
Prompt & CGBC Prior & CGBC & Median & Hard-Trim & Huber Loss & Cauchy & Confidence \\
\midrule
16 & 68.35 & \textbf{68.95} & 68.77 & 68.40 & 68.53 & 68.39 & 68.45 \\
50 & 68.56 & \textbf{69.25} & 68.88 & 69.07 & 68.94 & 68.87 & 68.61 \\
\bottomrule
\end{tabular}
\end{adjustbox}
\end{table}
We conduct a comprehensive ablation study to evaluate the impact of different LLM backbones, the size of the hard-negative neighborhood set, likelihood functions, and key hyperparameter $(\lambda, k)$ on  our proposed CGBC framework.

As shown in Table~\ref{tab:diffvlm}, we instantiate the concept synthesis pipeline with three versions of GPT-4.1, including Turbo, Mini, and Nano, representing decreasing model capacities, while keeping other settings consistent. Results across five datasets show stronger LLMs yield more effective concept priors, leading to better classification accuracy. This underscores the potential of the concept-based paradigm, benefiting from ongoing advances in LLM capabilities. Notably, even with the smallest model, CGBR consistently outperforms image augmentation-based methods such as MTA in Table \ref{tab:main}, demonstrating the advantage of incorporating external prior beyond the test image and its views. 

As shown in Table \ref{tab:diffvlm}, we examine the impact of the hard-negative neighborhood size $|L_i|$  when using GPT-4.1 Mini for concept synthesis. The results show that performance initially improves with increasing $|L_i|$, peaking at $|L_i| = 20$, and then declines. This trend provides further evidence that discriminative concepts enhance VLM-based image classification. However, overly large neighborhoods introduces noise, ultimately degrading performance.

As shown in Table~\ref{tab:diffmean}, we instantiate the likelihood function in our Bayesian classification framework using various robust mean estimation methods~\citep{huber1992robust}, as well as a confidence-based baseline that selects predictions with maximum similarity. Huber, Cauchy, and our sigmoid method represent soft-trim strategies that down-weight outlier concepts, whereas other methods adopt hard-trim approaches that discard them entirely. All robust estimators consistently outperform naive averaging, confirming the importance of outlier mitigation. In contrast, the confidence-based method performs poorly in the presence of outliers, as it may overemphasize noisy or spurious matches. Notably, our sigmoid-based logistic function achieves the best performance, likely due to its smooth and bounded mapping, which is well-suited to the skewed and long-tailed distribution of similarity scores between prompts and test images.
\begin{table}[t]
\centering
\caption{Averaged top-1 accuracy and contamination rate ($\bar{\rho}$) across eleven datasets under different hyper-parameter combinations of $\lambda$ and $k$. All other settings are fixed.}
\label{tab:hyper}
\begin{adjustbox}{max width=\linewidth}
\begin{tabular}{c|cccccc}
\toprule
\multirow{2}{*}{Metric} 
& \multicolumn{6}{c}{Hyper-parameter group ( $\lambda$, $k$ )} \\
\cmidrule(lr){2-7}
& (2.5, $e^{4.6}$) & (1.5, $e^{4.6}$) & (3.0, $e^{4.6}$) & (2.0, $e^{4.6}$) & (2.5, 0.25) & (2.5, 4) \\
\midrule
\textbf{Avg.}       & 69.25  & 69.07 & 69.19 & 69.14 & 69.11  & 69.16 \\
$\bar{\rho}$        & 0.107      & 0.191    & 0.058        & 0.191      & 0.107      & 0.107  \\
\bottomrule
\end{tabular}
\end{adjustbox}
\vspace{-15pt}
\end{table}

We further investigate the impact of the hyperparameters  $\lambda$ and $k$, controlling the sensitivity of outlier detection and the slope of the sigmoid-based likelihood in Table~\ref{tab:hyper}. Results show that increasing  $\lambda$, performance first improves and then declines, peaking at $\lambda=2.5$, while the average contamination rate consistently decreases across all tasks. Moreover, CGBC achieves the best performance when $k$ is set to the default logit scale of CLIP. These findings indicate that CGBC is robust to hyperparameter variations, requiring minimal tuning for effective deployment.

\section{Conclusion}
Vision-language models-based zero-shot image recognition has garnered increasing attention. While prompt-augmentation methods outperform view-based alternatives, they often rely on heuristic designs and lack systematic evaluation. To address these limitations, we enhance prompts with class-specific concepts and 
rethink zero-shot recognition from a Bayesian perspective. Our formulation underscores the importance of constructing a well-structured concept proposal distribution  and a test image-dependent likelihood function that refines the concept prior. To this end, we propose a multi-stage concept synthesis pipeline that leverages LLMs to generate concepts emphasizing discriminability, compositionality, and diversity. In addition, we introduce a training-free Adaptive Soft-Trim likelihood function to reduce the influence of outlier concepts in a single forward pass. Empirical results across various image recognition benchmarks show that our framework consistently outperforms state-of-the-art baselines.
\section*{Acknowledgment}
We thank the UGC FITE (6460001), RMGS (9229106), the MOE AcRF Tier 1 Seed Grant (RS37/24, \#025041-00001), Singapore, and NSFC (62525107), China for supporting this work.
{
    \small
    \bibliographystyle{ieeenat_fullname}
    \bibliography{main}

\begin{thebibliography}{66}
\providecommand{\natexlab}[1]{#1}
\providecommand{\url}[1]{\texttt{#1}}
\expandafter\ifx\csname urlstyle\endcsname\relax
  \providecommand{\doi}[1]{doi: #1}\else
  \providecommand{\doi}{doi: \begingroup \urlstyle{rm}\Url}\fi

\bibitem[Bossard et~al.(2014)Bossard, Guillaumin, and Van~Gool]{food101}
Lukas Bossard, Matthieu Guillaumin, and Luc Van~Gool.
\newblock Food-101--mining discriminative components with random forests.
\newblock In \emph{Computer vision--ECCV 2014: 13th European conference, zurich, Switzerland, September 6-12, 2014, proceedings, part VI 13}, pages 446--461. Springer, 2014.

\bibitem[Chater and Oaksford(2008)]{chater2008probabilistic}
Nick Chater and Mike Oaksford.
\newblock \emph{The probabilistic mind: Prospects for Bayesian cognitive science}.
\newblock Oxford University Press, USA, 2008.

\bibitem[Chen et~al.(2022)Chen, Yao, Song, Li, Rao, and Zhang]{chen2022plot}
Guangyi Chen, Weiran Yao, Xiangchen Song, Xinyue Li, Yongming Rao, and Kun Zhang.
\newblock Plot: Prompt learning with optimal transport for vision-language models.
\newblock \emph{arXiv preprint arXiv:2210.01253}, 2022.

\bibitem[Cimpoi et~al.(2014)Cimpoi, Maji, Kokkinos, Mohamed, and Vedaldi]{cimpoi2014describing}
Mircea Cimpoi, Subhransu Maji, Iasonas Kokkinos, Sammy Mohamed, and Andrea Vedaldi.
\newblock Describing textures in the wild.
\newblock In \emph{Proceedings of the IEEE conference on computer vision and pattern recognition}, pages 3606--3613, 2014.

\bibitem[Deng et~al.(2009)Deng, Dong, Socher, Li, Li, and Fei-Fei]{deng2009imagenet}
Jia Deng, Wei Dong, Richard Socher, Li-Jia Li, Kai Li, and Li Fei-Fei.
\newblock Imagenet: A large-scale hierarchical image database.
\newblock In \emph{2009 IEEE conference on computer vision and pattern recognition}, pages 248--255. Ieee, 2009.

\bibitem[Diakonikolas et~al.(2018)Diakonikolas, Kamath, Kane, Li, Moitra, and Stewart]{diakonikolas2018robustly}
Ilias Diakonikolas, Gautam Kamath, Daniel~M Kane, Jerry Li, Ankur Moitra, and Alistair Stewart.
\newblock Robustly learning a gaussian: Getting optimal error, efficiently.
\newblock In \emph{Proceedings of the Twenty-Ninth Annual ACM-SIAM Symposium on Discrete Algorithms}, pages 2683--2702. SIAM, 2018.

\bibitem[Diakonikolas et~al.(2023)Diakonikolas, Kane, Pensia, and Pittas]{diakonikolas2023near}
Ilias Diakonikolas, Daniel Kane, Ankit Pensia, and Thanasis Pittas.
\newblock Near-optimal algorithms for gaussians with huber contamination: Mean estimation and linear regression.
\newblock \emph{Advances in Neural Information Processing Systems}, 36:\penalty0 43384--43422, 2023.

\bibitem[Ellis(2024)]{ellis2024human}
Kevin Ellis.
\newblock Human-like few-shot learning via bayesian reasoning over natural language.
\newblock \emph{Advances in Neural Information Processing Systems}, 36, 2024.

\bibitem[Ellis et~al.(2021)Ellis, Wong, Nye, Sabl{\'e}-Meyer, Morales, Hewitt, Cary, Solar-Lezama, and Tenenbaum]{ellis2021dreamcoder}
Kevin Ellis, Catherine Wong, Maxwell Nye, Mathias Sabl{\'e}-Meyer, Lucas Morales, Luke Hewitt, Luc Cary, Armando Solar-Lezama, and Joshua~B Tenenbaum.
\newblock Dreamcoder: Bootstrapping inductive program synthesis with wake-sleep library learning.
\newblock In \emph{Proceedings of the 42nd acm sigplan international conference on programming language design and implementation}, pages 835--850, 2021.

\bibitem[Fang et~al.(2023)Fang, Wang, Xie, Sun, Wu, Wang, Huang, Wang, and Cao]{evaclip}
Yuxin Fang, Wen Wang, Binhui Xie, Quan Sun, Ledell Wu, Xinggang Wang, Tiejun Huang, Xinlong Wang, and Yue Cao.
\newblock Eva: Exploring the limits of masked visual representation learning at scale.
\newblock In \emph{Proceedings of the IEEE/CVF Conference on Computer Vision and Pattern Recognition}, pages 19358--19369, 2023.

\bibitem[Farina et~al.(2024)Farina, Franchi, Iacca, Mancini, and Ricci]{zero}
Matteo Farina, Gianni Franchi, Giovanni Iacca, Massimiliano Mancini, and Elisa Ricci.
\newblock Frustratingly easy test-time adaptation of vision-language models.
\newblock \emph{Advances in Neural Information Processing Systems}, 37:\penalty0 129062--129093, 2024.

\bibitem[Fei-Fei et~al.(2004)Fei-Fei, Fergus, and Perona]{fei2004learning}
Li Fei-Fei, Rob Fergus, and Pietro Perona.
\newblock Learning generative visual models from few training examples: An incremental bayesian approach tested on 101 object categories.
\newblock In \emph{2004 conference on computer vision and pattern recognition workshop}, pages 178--178. IEEE, 2004.

\bibitem[Feng et~al.(2023{\natexlab{a}})Feng, Yu, Liu, Khan, and Zuo]{DiffTPT}
Chun-Mei Feng, Kai Yu, Yong Liu, Salman Khan, and Wangmeng Zuo.
\newblock Diverse data augmentation with diffusions for effective test-time prompt tuning.
\newblock In \emph{Proceedings of the IEEE/CVF International Conference on Computer Vision}, pages 2704--2714, 2023{\natexlab{a}}.

\bibitem[Feng et~al.(2023{\natexlab{b}})Feng, Yu, Liu, Khan, and Zuo]{feng2023diverse}
Chun-Mei Feng, Kai Yu, Yong Liu, Salman Khan, and Wangmeng Zuo.
\newblock Diverse data augmentation with diffusions for effective test-time prompt tuning.
\newblock In \emph{Proceedings of the IEEE/CVF International Conference on Computer Vision}, pages 2704--2714, 2023{\natexlab{b}}.

\bibitem[Gao et~al.(2024)Gao, Geng, Zhang, Ma, Fang, Zhang, Li, and Qiao]{clipadapter}
Peng Gao, Shijie Geng, Renrui Zhang, Teli Ma, Rongyao Fang, Yongfeng Zhang, Hongsheng Li, and Yu Qiao.
\newblock Clip-adapter: Better vision-language models with feature adapters.
\newblock \emph{International Journal of Computer Vision}, 132\penalty0 (2):\penalty0 581--595, 2024.

\bibitem[Ge et~al.(2023)Ge, Ren, Gallagher, Wang, Yang, Adam, Itti, Lakshminarayanan, and Zhao]{zeroshotcvpr23}
Yunhao Ge, Jie Ren, Andrew Gallagher, Yuxiao Wang, Ming-Hsuan Yang, Hartwig Adam, Laurent Itti, Balaji Lakshminarayanan, and Jiaping Zhao.
\newblock Improving zero-shot generalization and robustness of multi-modal models.
\newblock In \emph{Proceedings of the IEEE/CVF conference on computer vision and pattern recognition}, pages 11093--11101, 2023.

\bibitem[Glanois et~al.(2022)Glanois, Jiang, Feng, Weng, Zimmer, Li, Liu, and Hao]{glanois2022neuro}
Claire Glanois, Zhaohui Jiang, Xuening Feng, Paul Weng, Matthieu Zimmer, Dong Li, Wulong Liu, and Jianye Hao.
\newblock Neuro-symbolic hierarchical rule induction.
\newblock In \emph{International Conference on Machine Learning}, pages 7583--7615. PMLR, 2022.

\bibitem[Helber et~al.(2019)Helber, Bischke, Dengel, and Borth]{helber2019eurosat}
Patrick Helber, Benjamin Bischke, Andreas Dengel, and Damian Borth.
\newblock Eurosat: A novel dataset and deep learning benchmark for land use and land cover classification.
\newblock \emph{IEEE Journal of Selected Topics in Applied Earth Observations and Remote Sensing}, 12\penalty0 (7):\penalty0 2217--2226, 2019.

\bibitem[Hendrycks et~al.(2021{\natexlab{a}})Hendrycks, Basart, Mu, Kadavath, Wang, Dorundo, Desai, Zhu, Parajuli, Guo, et~al.]{imagenetr}
Dan Hendrycks, Steven Basart, Norman Mu, Saurav Kadavath, Frank Wang, Evan Dorundo, Rahul Desai, Tyler Zhu, Samyak Parajuli, Mike Guo, et~al.
\newblock The many faces of robustness: A critical analysis of out-of-distribution generalization.
\newblock In \emph{Proceedings of the IEEE/CVF international conference on computer vision}, pages 8340--8349, 2021{\natexlab{a}}.

\bibitem[Hendrycks et~al.(2021{\natexlab{b}})Hendrycks, Zhao, Basart, Steinhardt, and Song]{imageneta}
Dan Hendrycks, Kevin Zhao, Steven Basart, Jacob Steinhardt, and Dawn Song.
\newblock Natural adversarial examples.
\newblock In \emph{Proceedings of the IEEE/CVF conference on computer vision and pattern recognition}, pages 15262--15271, 2021{\natexlab{b}}.

\bibitem[Huang et~al.(2025)Huang, Jiang, Jiang, Li, Khan, and Wang]{huang2025cosmic}
Fanding Huang, Jingyan Jiang, Qinting Jiang, Hebei Li, Faisal~Nadeem Khan, and Zhi Wang.
\newblock Cosmic: Clique-oriented semantic multi-space integration for robust clip test-time adaptation.
\newblock In \emph{Proceedings of the Computer Vision and Pattern Recognition Conference}, pages 9772--9781, 2025.

\bibitem[Huang et~al.(2022)Huang, Chu, and Wei]{huang2022unsupervised}
Tony Huang, Jack Chu, and Fangyun Wei.
\newblock Unsupervised prompt learning for vision-language models.
\newblock \emph{arXiv preprint arXiv:2204.03649}, 2022.

\bibitem[Huber(1992)]{huber1992robust}
Peter~J Huber.
\newblock Robust estimation of a location parameter.
\newblock In \emph{Breakthroughs in statistics: Methodology and distribution}, pages 492--518. Springer, 1992.

\bibitem[Kahneman(2011)]{kahneman2011thinking}
Daniel Kahneman.
\newblock Thinking, fast and slow.
\newblock \emph{Farrar, Straus and Giroux}, 2011.

\bibitem[Kloek and Van~Dijk(1978)]{kloek1978bayesian}
Teun Kloek and Herman~K Van~Dijk.
\newblock Bayesian estimates of equation system parameters: an application of integration by monte carlo.
\newblock \emph{Econometrica: Journal of the Econometric Society}, pages 1--19, 1978.

\bibitem[Krause et~al.(2013)Krause, Stark, Deng, and Fei-Fei]{krause20133d}
Jonathan Krause, Michael Stark, Jia Deng, and Li Fei-Fei.
\newblock 3d object representations for fine-grained categorization.
\newblock In \emph{Proceedings of the IEEE international conference on computer vision workshops}, pages 554--561, 2013.

\bibitem[Kulesza et~al.(2012)Kulesza, Taskar, et~al.]{kulesza2012determinantal}
Alex Kulesza, Ben Taskar, et~al.
\newblock Determinantal point processes for machine learning.
\newblock \emph{Foundations and Trends{\textregistered} in Machine Learning}, 5\penalty0 (2--3):\penalty0 123--286, 2012.

\bibitem[Lafon et~al.(2025)Lafon, Hakim, Rambour, Desrosier, and Thome]{lafon2025cliptta}
Marc Lafon, Gustavo Adolfo~Vargas Hakim, Cl{\'e}ment Rambour, Christian Desrosier, and Nicolas Thome.
\newblock Cliptta: Robust contrastive vision-language test-time adaptation.
\newblock \emph{arXiv preprint arXiv:2507.14312}, 2025.

\bibitem[Lester et~al.(2021)Lester, Al-Rfou, and Constant]{lester2021power}
Brian Lester, Rami Al-Rfou, and Noah Constant.
\newblock The power of scale for parameter-efficient prompt tuning.
\newblock \emph{arXiv preprint arXiv:2104.08691}, 2021.

\bibitem[Li et~al.(2023)Li, Li, Savarese, and Hoi]{blip2}
Junnan Li, Dongxu Li, Silvio Savarese, and Steven Hoi.
\newblock Blip-2: Bootstrapping language-image pre-training with frozen image encoders and large language models.
\newblock In \emph{International conference on machine learning}, pages 19730--19742. PMLR, 2023.

\bibitem[Liu et~al.(2023{\natexlab{a}})Liu, Wang, and Li]{liu2023interpretable}
Hui Liu, Wenya Wang, and Haoliang Li.
\newblock Interpretable multimodal misinformation detection with logic reasoning.
\newblock \emph{arXiv preprint arXiv:2305.05964}, 2023{\natexlab{a}}.

\bibitem[Liu et~al.(2024)Liu, Li, Wu, and Lee]{llava}
Haotian Liu, Chunyuan Li, Qingyang Wu, and Yong~Jae Lee.
\newblock Visual instruction tuning.
\newblock \emph{Advances in neural information processing systems}, 36, 2024.

\bibitem[Liu et~al.(2023{\natexlab{b}})Liu, Du, Li, Tenenbaum, and Torralba]{liu2023unsupervised}
Nan Liu, Yilun Du, Shuang Li, Joshua~B Tenenbaum, and Antonio Torralba.
\newblock Unsupervised compositional concepts discovery with text-to-image generative models.
\newblock In \emph{Proceedings of the IEEE/CVF International Conference on Computer Vision}, pages 2085--2095, 2023{\natexlab{b}}.

\bibitem[Ma et~al.(2024)Ma, Zhang, Guo, and Xu]{ma2024swapprompt}
Xiaosong Ma, Jie Zhang, Song Guo, and Wenchao Xu.
\newblock Swapprompt: Test-time prompt adaptation for vision-language models.
\newblock \emph{Advances in Neural Information Processing Systems}, 36, 2024.

\bibitem[Maji et~al.(2013)Maji, Rahtu, Kannala, Blaschko, and Vedaldi]{maji2013fine}
Subhransu Maji, Esa Rahtu, Juho Kannala, Matthew Blaschko, and Andrea Vedaldi.
\newblock Fine-grained visual classification of aircraft.
\newblock \emph{arXiv preprint arXiv:1306.5151}, 2013.

\bibitem[Martin et~al.(2024)Martin, Huang, Shakeri, Pesquet, and Ben~Ayed]{martin2024transductive}
S{\'e}gol{\`e}ne Martin, Yunshi Huang, Fereshteh Shakeri, Jean-Christophe Pesquet, and Ismail Ben~Ayed.
\newblock Transductive zero-shot and few-shot clip.
\newblock In \emph{Proceedings of the IEEE/CVF Conference on Computer Vision and Pattern Recognition}, pages 28816--28826, 2024.

\bibitem[Nilsback and Zisserman(2008)]{nilsback2008automated}
Maria-Elena Nilsback and Andrew Zisserman.
\newblock Automated flower classification over a large number of classes.
\newblock In \emph{2008 Sixth Indian conference on computer vision, graphics \& image processing}, pages 722--729. IEEE, 2008.

\bibitem[Novack et~al.(2023)Novack, McAuley, Lipton, and Garg]{novack2023chilszeroshot}
Zachary Novack, Julian McAuley, Zachary~Chase Lipton, and Saurabh Garg.
\newblock Chils: Zero-shot image classification with hierarchical label sets.
\newblock In \emph{International Conference on Machine Learning}, pages 26342--26362. PMLR, 2023.

\bibitem[Parkhi et~al.(2012)Parkhi, Vedaldi, Zisserman, and Jawahar]{parkhi2012cats}
Omkar~M Parkhi, Andrea Vedaldi, Andrew Zisserman, and CV Jawahar.
\newblock Cats and dogs.
\newblock In \emph{2012 IEEE conference on computer vision and pattern recognition}, pages 3498--3505. IEEE, 2012.

\bibitem[Pratt et~al.(2023)Pratt, Covert, Liu, and Farhadi]{pratt2023does}
Sarah Pratt, Ian Covert, Rosanne Liu, and Ali Farhadi.
\newblock What does a platypus look like? generating customized prompts for zero-shot image classification.
\newblock In \emph{Proceedings of the IEEE/CVF International Conference on Computer Vision}, pages 15691--15701, 2023.

\bibitem[Qu et~al.(2020)Qu, Chen, Xhonneux, Bengio, and Tang]{qu2020rnnlogic}
Meng Qu, Junkun Chen, Louis-Pascal Xhonneux, Yoshua Bengio, and Jian Tang.
\newblock Rnnlogic: Learning logic rules for reasoning on knowledge graphs.
\newblock \emph{arXiv preprint arXiv:2010.04029}, 2020.

\bibitem[Radford et~al.(2021)Radford, Kim, Hallacy, Ramesh, Goh, Agarwal, Sastry, Askell, Mishkin, Clark, Krueger, and Sutskever]{clip}
Alec Radford, Jong~Wook Kim, Chris Hallacy, Aditya Ramesh, Gabriel Goh, Sandhini Agarwal, Girish Sastry, Amanda Askell, Pamela Mishkin, Jack Clark, Gretchen Krueger, and Ilya Sutskever.
\newblock Learning transferable visual models from natural language supervision.
\newblock 139:\penalty0 8748--8763, 2021.

\bibitem[Recht et~al.(2019)Recht, Roelofs, Schmidt, and Shankar]{imagenetv2}
Benjamin Recht, Rebecca Roelofs, Ludwig Schmidt, and Vaishaal Shankar.
\newblock Do imagenet classifiers generalize to imagenet?
\newblock In \emph{International conference on machine learning}, pages 5389--5400. PMLR, 2019.

\bibitem[Saad et~al.(2019)Saad, Cusumano-Towner, Schaechtle, Rinard, and Mansinghka]{saad2019bayesian}
Feras~A Saad, Marco~F Cusumano-Towner, Ulrich Schaechtle, Martin~C Rinard, and Vikash~K Mansinghka.
\newblock Bayesian synthesis of probabilistic programs for automatic data modeling.
\newblock \emph{Proceedings of the ACM on Programming Languages}, 3\penalty0 (POPL):\penalty0 1--32, 2019.

\bibitem[Shang et~al.(2024)Shang, Zhou, Zhang, Ni, Yang, and Wang]{shang2024incremental}
Chenming Shang, Shiji Zhou, Hengyuan Zhang, Xinzhe Ni, Yujiu Yang, and Yuwang Wang.
\newblock Incremental residual concept bottleneck models.
\newblock In \emph{Proceedings of the IEEE/CVF Conference on Computer Vision and Pattern Recognition}, pages 11030--11040, 2024.

\bibitem[Shu et~al.(2022)Shu, Nie, Huang, Yu, Goldstein, Anandkumar, and Xiao]{shu2022tpt}
Manli Shu, Weili Nie, De-An Huang, Zhiding Yu, Tom Goldstein, Anima Anandkumar, and Chaowei Xiao.
\newblock Test-time prompt tuning for zero-shot generalization in vision-language models.
\newblock \emph{Advances in Neural Information Processing Systems}, 35:\penalty0 14274--14289, 2022.

\bibitem[Silva-Rodriguez et~al.(2024)Silva-Rodriguez, Hajimiri, Ben~Ayed, and Dolz]{silva2024closer}
Julio Silva-Rodriguez, Sina Hajimiri, Ismail Ben~Ayed, and Jose Dolz.
\newblock A closer look at the few-shot adaptation of large vision-language models.
\newblock In \emph{Proceedings of the IEEE/CVF Conference on Computer Vision and Pattern Recognition}, pages 23681--23690, 2024.

\bibitem[Soomro(2012)]{soomro2012ucf101}
K Soomro.
\newblock Ucf101: A dataset of 101 human actions classes from videos in the wild.
\newblock \emph{arXiv preprint arXiv:1212.0402}, 2012.

\bibitem[Tian et~al.(2020)Tian, Ellis, Kryven, and Tenenbaum]{tian2020learning}
Lucas Tian, Kevin Ellis, Marta Kryven, and Josh Tenenbaum.
\newblock Learning abstract structure for drawing by efficient motor program induction.
\newblock \emph{Advances in Neural Information Processing Systems}, 33:\penalty0 2686--2697, 2020.

\bibitem[Udandarao et~al.(2023)Udandarao, Gupta, and Albanie]{udandarao2023sus}
Vishaal Udandarao, Ankush Gupta, and Samuel Albanie.
\newblock Sus-x: Training-free name-only transfer of vision-language models.
\newblock In \emph{Proceedings of the IEEE/CVF International Conference on Computer Vision}, pages 2725--2736, 2023.

\bibitem[Wang et~al.(2019)Wang, Ge, Lipton, and Xing]{imagenetsketch}
Haohan Wang, Songwei Ge, Zachary Lipton, and Eric~P Xing.
\newblock Learning robust global representations by penalizing local predictive power.
\newblock \emph{Advances in Neural Information Processing Systems}, 32, 2019.

\bibitem[Wang et~al.(2023)Wang, Liang, He, Xu, Wang, and Tan]{wang2023improving}
Zhengbo Wang, Jian Liang, Ran He, Nan Xu, Zilei Wang, and Tieniu Tan.
\newblock Improving zero-shot generalization for clip with synthesized prompts.
\newblock In \emph{Proceedings of the IEEE/CVF International Conference on Computer Vision}, pages 3032--3042, 2023.

\bibitem[Xiao et~al.(2010)Xiao, Hays, Ehinger, Oliva, and Torralba]{sun397}
Jianxiong Xiao, James Hays, Krista~A Ehinger, Aude Oliva, and Antonio Torralba.
\newblock Sun database: Large-scale scene recognition from abbey to zoo.
\newblock In \emph{2010 IEEE computer society conference on computer vision and pattern recognition}, pages 3485--3492. IEEE, 2010.

\bibitem[Xiao et~al.(2024)Xiao, Mao, Zhang, He, and Cambria]{xiao2024vanessa}
Luwei Xiao, Rui Mao, Xulang Zhang, Liang He, and Erik Cambria.
\newblock Vanessa: Visual connotation and aesthetic attributes understanding network for multimodal aspect-based sentiment analysis.
\newblock In \emph{Findings of the Association for Computational Linguistics: EMNLP 2024}, pages 11486--11500, 2024.

\bibitem[Xiao et~al.(2025)Xiao, Mao, Zhao, Lin, Jia, He, and Cambria]{xiao2025exploring}
Luwei Xiao, Rui Mao, Shuai Zhao, Qika Lin, Yanhao Jia, Liang He, and Erik Cambria.
\newblock Exploring cognitive and aesthetic causality for multimodal aspect-based sentiment analysis.
\newblock \emph{IEEE Transactions on Affective Computing}, 2025.

\bibitem[Yin et~al.(2026)Yin, Xiao, Feng, Chen, Zhu, Luo, Alamri, Mao, and Cambria]{yin2026aosnet}
Zhengnan Yin, Luwei Xiao, Xuan Feng, Yiwei Chen, Xianxun Zhu, Cai Luo, Faten~S Alamri, Rui Mao, and Erik Cambria.
\newblock Aosnet-sec: Aperture--orientation--spectrum fusion with statistical markov repair for trustworthy super-resolution.
\newblock \emph{Pattern Recognition}, page 113346, 2026.

\bibitem[Yoon et~al.(2024)Yoon, Yoon, Tee, Hasegawa-Johnson, Li, and Yoo]{C-tpt}
Hee~Suk Yoon, Eunseop Yoon, Joshua Tian~Jin Tee, Mark Hasegawa-Johnson, Yingzhen Li, and Chang~D Yoo.
\newblock C-tpt: Calibrated test-time prompt tuning for vision-language models via text feature dispersion.
\newblock \emph{arXiv preprint arXiv:2403.14119}, 2024.

\bibitem[Yu et~al.(2023)Yu, Lu, Jin, Chen, and Wang]{yu2023task}
Tao Yu, Zhihe Lu, Xin Jin, Zhibo Chen, and Xinchao Wang.
\newblock Task residual for tuning vision-language models.
\newblock In \emph{Proceedings of the IEEE/CVF Conference on Computer Vision and Pattern Recognition}, pages 10899--10909, 2023.

\bibitem[Zanella and Ben~Ayed(2024)]{matcvpr24}
Maxime Zanella and Ismail Ben~Ayed.
\newblock On the test-time zero-shot generalization of vision-language models: Do we really need prompt learning?
\newblock In \emph{Proceedings of the IEEE/CVF Conference on Computer Vision and Pattern Recognition}, pages 23783--23793, 2024.

\bibitem[Zanella et~al.(2024)Zanella, G{\'e}rin, and Ayed]{zanellaboostingnips24}
Maxime Zanella, Beno{\^\i}t G{\'e}rin, and Ismail~Ben Ayed.
\newblock Boosting vision-language models with transduction.
\newblock \emph{arXiv preprint arXiv:2406.01837}, 2024.

\bibitem[Zhang et~al.(2022)Zhang, Zhang, Fang, Gao, Li, Dai, Qiao, and Li]{tipadapter}
Renrui Zhang, Wei Zhang, Rongyao Fang, Peng Gao, Kunchang Li, Jifeng Dai, Yu Qiao, and Hongsheng Li.
\newblock Tip-adapter: Training-free adaption of clip for few-shot classification.
\newblock In \emph{European conference on computer vision}, pages 493--510. Springer, 2022.

\bibitem[Zhang et~al.(2023)Zhang, Hu, Li, Huang, Deng, Qiao, Gao, and Li]{zhang2023prompt}
Renrui Zhang, Xiangfei Hu, Bohao Li, Siyuan Huang, Hanqiu Deng, Yu Qiao, Peng Gao, and Hongsheng Li.
\newblock Prompt, generate, then cache: Cascade of foundation models makes strong few-shot learners.
\newblock In \emph{Proceedings of the IEEE/CVF Conference on Computer Vision and Pattern Recognition}, pages 15211--15222, 2023.

\bibitem[Zhou et~al.(2022{\natexlab{a}})Zhou, Yang, Loy, and Liu]{cocoop}
Kaiyang Zhou, Jingkang Yang, Chen~Change Loy, and Ziwei Liu.
\newblock Conditional prompt learning for vision-language models.
\newblock In \emph{Proceedings of the IEEE/CVF conference on computer vision and pattern recognition}, pages 16816--16825, 2022{\natexlab{a}}.

\bibitem[Zhou et~al.(2022{\natexlab{b}})Zhou, Yang, Loy, and Liu]{coop}
Kaiyang Zhou, Jingkang Yang, Chen~Change Loy, and Ziwei Liu.
\newblock Learning to prompt for vision-language models.
\newblock \emph{International Journal of Computer Vision}, 130\penalty0 (9):\penalty0 2337--2348, 2022{\natexlab{b}}.

\bibitem[Zhu et~al.(2023)Zhu, Niu, Han, Wu, and Zhang]{zhu2023prompt}
Beier Zhu, Yulei Niu, Yucheng Han, Yue Wu, and Hanwang Zhang.
\newblock Prompt-aligned gradient for prompt tuning.
\newblock In \emph{Proceedings of the IEEE/CVF International Conference on Computer Vision}, pages 15659--15669, 2023.

\bibitem[Zhu et~al.(2025)Zhu, Zhu, Wang, Zhao, and Zhang]{COLA}
Xingyu Zhu, Beier Zhu, Shuo Wang, Kesen Zhao, and Hanwang Zhang.
\newblock Enhancing clip robustness via cross-modality alignment.
\newblock \emph{arXiv preprint arXiv:2510.24038}, 2025.

\end{thebibliography}
}
\setcounter{page}{1}
\maketitlesupplementary
\section{Experimental Settings}
\begin{table}[h]
\centering
\caption{\centering Statistics of fifteen datasets used in our work.}
\label{tab:datasets_details}
\begin{adjustbox}{width=0.7\linewidth, center}
    \begin{tabular}{l@{\hskip 1em}c@{\hskip 1em}c}
    \toprule
        \textbf{Dataset} & \textbf{Number of Classes} & \textbf{Test Set Size} \\
    \midrule
    SUN397~\citep{sun397} & 397 & 19,850 \\
    Aircraft~\citep{maji2013fine} & 100 & 3,333 \\
    EuroSAT~\citep{helber2019eurosat} & 10 & 8,100 \\
    StanfordCars~\citep{krause20133d} & 196 & 8,041 \\
    Food101~\citep{food101} & 101 & 30,300 \\
    OxfordPets~\citep{parkhi2012cats} & 37 & 3,669 \\
    Flower102~\citep{nilsback2008automated} & 102 & 2,463 \\
    Caltech101~\citep{fei2004learning} & 100 & 2,465 \\
    DTD~\citep{cimpoi2014describing} & 47 & 1,692 \\
    UCF101~\citep{soomro2012ucf101} & 101 & 3,783 \\
    ImageNet~\citep{deng2009imagenet} & 1,000 & 50,000 \\
    ImageNet-A~\citep{imageneta} & 200 & 7,500 \\
    ImageNet-V2~\citep{imagenetv2} & 1,000 & 10,000 \\
    ImageNet-R~\citep{imagenetr} & 200 & 30,000 \\
    ImageNet-Sketch~\citep{imagenetsketch} & 1,000 & 50,889 \\
    \bottomrule
    \end{tabular}
\end{adjustbox}
\end{table}
\subsection{Datasets}
We primarily assess our framework on eleven classification datasets, including automobiles (Cars~\citep{krause20133d}), textures (DTD~\citep{cimpoi2014describing}), human actions (UCF101~\citep{soomro2012ucf101}), aircraft types (Aircraft~\citep{maji2013fine}), satellite imagery (EuroSAT~\citep{helber2019eurosat}), pet breeds (Pets~\citep{parkhi2012cats}), flowers (Flower102~\citep{nilsback2008automated}), food items (Food101~\citep{food101}), scenes (SUN397~\citep{sun397}), and general objects (Caltech101~\citep{fei2004learning} and ImageNet~\citep{deng2009imagenet}). We additionally evaluate the robustness of our framework to natural distribution shifts on four ImageNet variants, involving ImageNet-A~\citep{imageneta}, ImageNet-V2~\citep{imagenetv2}, ImageNet-R~\citep{imagenetr}, ImageNet-Sketch~\citep{imagenetsketch} as out-of-distribution (OOD) data for ImageNet~\citep{deng2009imagenet}. The dataset statistics are shown in Table \ref{tab:datasets_details}.
\subsection{Details of baselines}
\textit{CLIP} and CLIP + E are two versions adopted by~\citet{clip} while the former uses a default prompt ``A photo of a \{class\}." and the latter uses the ensemble of 80 hand-crafted prompts. TPT and MTA generate multiple augmented views of test images to enhance prediction robustness. TPT minimizes the entropy of the prediction distributions among all views via prompt tuning. MTA, on the other hand, optimizes an importance score for each view to find the mode of the density of all views and enforces similar importance scores for views with similar prediction distributions. COLA enhances robustness by projecting adversarial image features into a text-induced subspace and refining alignment through optimal transport. CuPL uses five fixed questions (e.g., ``What does a \{class\} look like?”) to prompt an LLM and treats the responses as textual prompts. C-TPT~\citep{C-tpt} jointly optimizes prompts during test time to improve calibration by maximizing the Average Text Feature Dispersion, motivated by the observation that prompts inducing higher text-feature dispersion yield lower calibration error. DiffTPT~\citep{DiffTPT}  leverages a pre-trained diffusion model to synthesize diverse augmented views for each test image and applies a cosine-similarity filtration to remove spurious generations, thereby balancing augmentation diversity and prediction fidelity for test-time prompt tuning. ZERO~\citep{zero} is a training-free test-time adaptation baseline that augments the test image, keeps the most confident views, then sets the Softmax temperature to zero so that marginalization effectively becomes voting over confident predictions, requiring only a single batched forward pass through the vision encoder and no backpropagation.
\subsection{Implementation details}
For concept synthesis, we use GPT-4.1 Turbo to generate 10 concepts per API call. For each class $Y_i$, we generate 50 atomic concepts ($M_A = 50$) and randomly sample 500 composite combinations ($|\hat{\mathcal{C}}_i| = 500$) from the atomic set with 3 atomic concepts per combination. The final compositional concept set $\mathcal{C}_i$ is selected using DPP algorithm with a size of either 16 or 50 ($M_i = 16$ or $50$) in our main experiments. The prompt used by the LLM for the discriminative concept generation utilized in CGBC is provided in Table~\ref{prompt:congen}, while the prompt for the descriptive baseline, as discussed in Sec.~\ref{sec:5.5}, is detailed in Table~\ref{prompt:desgen}.

In the Adaptive Soft-Trim Likelihood, we set the outlier threshold as $\lambda = 2.5$ and sigmoid slope as the logit scale of CLIP $k = e^{4.6}$. All baselines are evaluated using their default hyperparameters as reported in their respective papers. Unless otherwise specified, we use CLIP with a ViT-B/16 backbone as the primary VLM. To mitigate the effect of randomness, we report the average top-1 accuracy over three random seeds for all main experiments.
\subsection{Concept Generation Cost Analysis}
In our experiments, we utilize three variants of the GPT-4.1 model, involving GPT-4.1-Turbo, GPT-4.1-Mini, and GPT-4.1-Nano. The token pricing for each model is as follows:
\begin{figure*}[htbp]
    \centering
    \includegraphics[width=\linewidth]{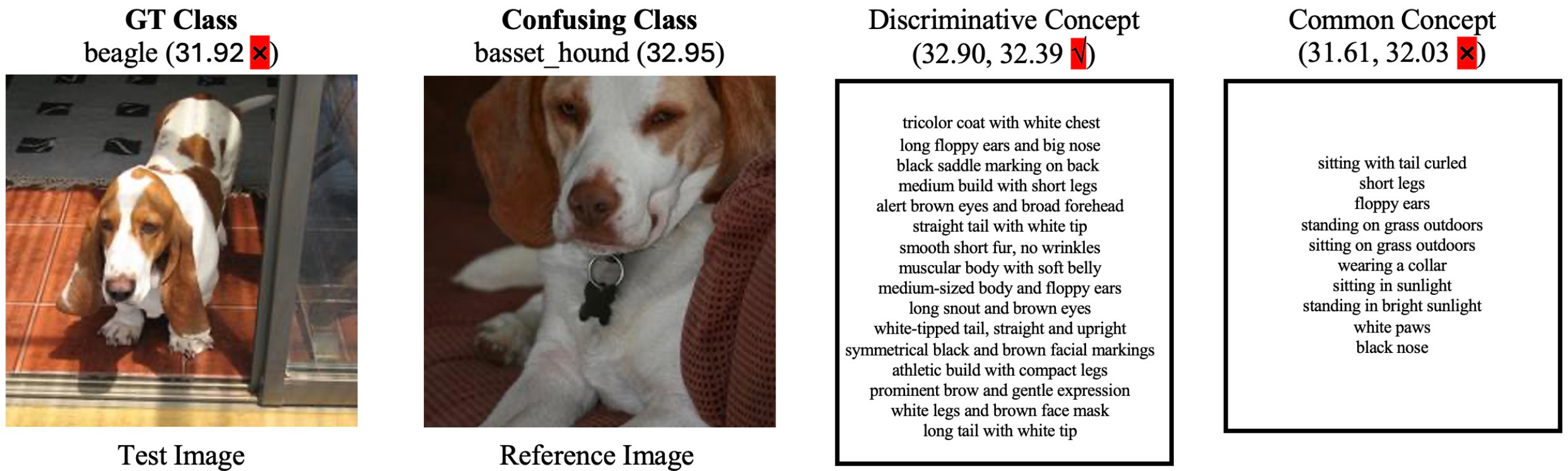}
    \caption{Illustrative examples of Discriminability in constructing the Concept Proposal Distribution.
We multiply the raw similarity values by CLIP's scaling factor.}
    \label{fig:q1}  
\end{figure*}
\begin{itemize}
    \item \textbf{GPT-4.1-Turbo}: \$2.00 per 1M input tokens, \$8.00 per 1M output tokens.
    \item \textbf{GPT-4.1-Mini}: \$0.40 per 1M input tokens, \$1.60 per 1M output tokens.
    \item \textbf{GPT-4.1-Nano}: \$0.10 per 1M input tokens, \$0.40 per 1M output tokens.
\end{itemize}
\begin{figure}[htbp]
    \centering
    \includegraphics[width=\linewidth]{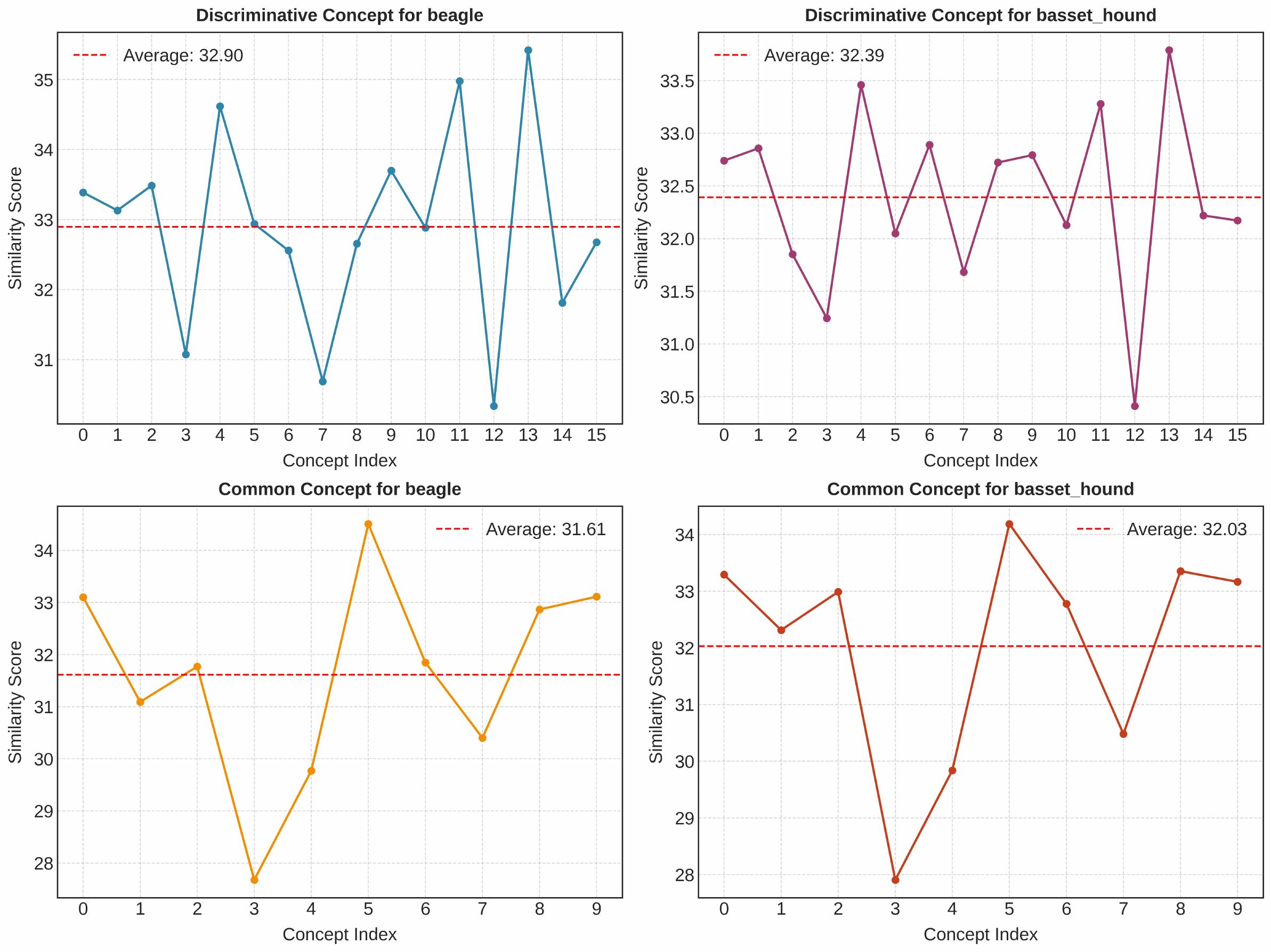}
    \caption{Visualization of similarity between prompts enhanced by discriminative and common concepts for the beagle and basset hound classes. We multiply the raw similarity values by CLIP's scaling factor.}
    \label{fig:simplot}  
\end{figure}
For each API call, we generate 10 candidate concepts. To obtain 50 unique concepts per class, considering possible duplication, we typically require approximately 6 API calls per class. Assuming a total of 1,000 classes, the approximate cost for each model variant is as follows:

\begin{itemize}
    \item \textbf{GPT-4.1-Turbo}: \$10.00
    \item \textbf{GPT-4.1-Mini}: \$2.00
    \item \textbf{GPT-4.1-Nano}: \$0.50
\end{itemize}
\begin{table*}[ht]
    \centering
    \caption{Prompt for discriminative concept generation. The red text represents variables that are dynamically replaced based on the specific datasets and image classes being tested. ``Core class" and ``Other Classes" correspond to $Y_i$ and $L_{i,j}$, respectively.}
    \begin{tabular}{>{\raggedright\arraybackslash}p{0.95\textwidth}}
        \hline
        \textbf{SYSTEM PROMPT:} \\
        You are a visual concept proposer tasked with enhancing text descriptions for zero-shot image classification on the test dataset using CLIP. \\

        Given: \\
        - A core class from the test dataset \\
        - The set of other classes in the dataset \\

        Task: \\
        Propose concise, visually discriminative concepts to append to the text description (i.e., ``A photo of \{core class\} with \{your concept\}'') that help CLIP better distinguish the core class from the other classes. \\

        Guidelines: \\
        - Analyze the unique visual characteristics of the core class compared to other classes\\
        - Propose concepts that capture these discriminative visual features. \\
        - Ensure concepts are concrete, easily understandable by CLIP, and specific to the test dataset. \\
        - Each concept should enable CLIP to more accurately classify images of the core class while minimizing confusion with other classes. \\

        \textbf{IMPORTANT: Your response must follow this exact format:} \\

        \texttt{<concepts begin>} \\
        concept1 \\
        concept2 \\
        concept3 \\
        \texttt{</concepts end>} \\

        \textbf{Rules:} \\
        - Start with \texttt{<concepts begin>} and end with \texttt{</concepts end>} \\
        - Each concept should be on a new line \\
        - Each concept MUST start with \texttt{"The final concept is: "} \\
        - Ensure concepts are clear, specific, and relevant to the core class \\
        - Avoid generic or ambiguous concepts \\
        - Each concept should be unique and distinct from others \\
        - Keep each concept brief (ideally $\leq$6 words), specific, and easy for CLIP to parse. \\
        \hline
        \textbf{USER PROMPT:} \\ 
        Core class: \textcolor{red}{[Core class]}. Other classes: \textcolor{red}{[Other Classes]}. Please generate 10 unique and visually discriminative concepts. Follow the required format and rules. \\
        \hline
    \end{tabular}
    \label{prompt:congen}
\end{table*}
\begin{table*}[ht]
    \centering
    \caption{Prompt for descriptive concept generation. The red text represents variables that will be replaced based on the specific datasets and image classes being tested. ``Core class" will be instantiated by $Y_i$.}
    \begin{tabular}{>{\raggedright\arraybackslash}p{0.95\textwidth}}
        \hline
        \textbf{SYSTEM PROMPT:} \\
        You are a visual concept proposer tasked with enhancing text descriptions for zero-shot image classification on the test dataset using CLIP.  \\
        
        Given: \\
        - A class from the test dataset \\

        Task: \\
        Propose descriptive concepts to append to the text description (i.e., ``A photo of \{core class\} with \{your concept\}'') that help CLIP better understand and recognize the core class. \\

        Guidelines: \\
        - Focus on the visual characteristics and attributes of the core class itself. \\
        - Generate descriptive concepts that capture various aspects, appearances, or contexts of the core class. \\
        - Ensure concepts are concrete, easily understandable by CLIP, and specific to the test dataset. \\
        - Think about different visual perspectives, settings, or attributes that describe the core class. \\

        \textbf{IMPORTANT: Your response must follow this exact format:} \\

        \texttt{<concepts begin>} \\
        concept1 \\
        concept2 \\
        concept3 \\
        \texttt{</concepts end>} \\

        \textbf{Rules:} \\
        - Start with \texttt{<concepts begin>} and end with \texttt{</concepts end>} \\
        - Each concept should be on a new line \\
        - Each concept MUST start with \texttt{"The final concept is: "} \\
        - Ensure concepts are clear, specific, and relevant to the given class \\
        - Avoid generic or ambiguous concepts \\
        - Each concept should be unique and distinct from others \\
        - Keep each concept brief (ideally $\leq$6 words), specific, and easy for CLIP to parse. \\
        \hline
        \textbf{USER PROMPT:} \\ 
        Core class: \textcolor{red}{[Core class]}. Please generate 10 unique and descriptive concepts that capture different visual aspects of this class.\\
        \hline
    \end{tabular}
    \label{prompt:desgen}
\end{table*}
\section{Supplementary Experiments}
\begin{table*}[t!]
\caption{Performance of zero-shot methods on different variants of ImageNet datasets. We report the average performance (Avg.) and average out-of-distribution performance (OOD Avg.). Best and second-best results are \textbf{bolded} and \underline{underlined}, respectively. The \textit{Auxiliary} column shows the number of views and prompts.}
\label{tab:oodperformance}
\centering
\begin{adjustbox}{width=0.9\linewidth, center}
\begin{tabular}{lccccc|cc|c}
\toprule
\textbf{Method} & \textbf{A} & \textbf{R} & \textbf{K} & \textbf{V} & \textbf{I} & \textbf{Avg.} & \textbf{OOD Avg.} & \textbf{Auxiliary} \\
\midrule
\textit{CLIP} & 47.80 & 73.99 & 46.15 & 60.84 & 66.73 & 59.10 & 57.20 & (1, 1) \\
CLIP + E. & 49.95 & \textbf{77.71} & 48.26 & 61.91 & 68.38 & 61.24 & 59.46 & (1, 80) \\
TPT & 54.63 & 77.04 & 47.97 & 63.41 & 68.94 & 62.40 & 60.76 & (64, 1) \\
MTA & 57.41 & 76.92 & 48.58 & 63.61 & 69.29 & 63.16 & 61.63 & (64, 1) \\
Cupl & 47.52 & 74.14 & 46.43 & 59.6 & 66.75 & 58.89 & 56.92 & (1, 16) \\
Cupl & 48.12 & 74.66 & 46.96 & 60.43 & 66.47 & 59.33 & 57.54 & (1, 50) \\
CGBC Prior & 49.44 & 75.30 & 48.06 & 61.56 & 68.21 & 60.51 & 58.59 & (1, 16) \\
CGBC & 49.54 & 75.35 & 48.16 & 62.11 & 69.22 & 60.88 & 58.79 & (1, 16) \\
\rowcolor{lavender1}
CGBC + View & \underline{61.68} & \underline{77.65} & \underline{49.24} & \underline{64.04} & \underline{70.5} & \underline{64.62} & \underline{63.15} & (64, 16) \\
CGBC Prior & 49.91 & 75.4 & 48.14 & 61.61 & 68.32 & 60.68 & 58.77 & (1, 50) \\
CGBC & 49.79 & 75.49 & 48.36 & 62.15 & 69.43 & 61.04 & 58.95 & (1, 50) \\
\rowcolor{lavender1}
CGBC + View & \textbf{62.19} & 77.41 & \textbf{49.45} & \textbf{64.37} & \textbf{70.64} & \textbf{64.81} & \textbf{63.36} & (64, 50) \\
\bottomrule
\end{tabular}
\end{adjustbox}
\end{table*}
\subsection{Robustness Analysis}  To evaluate the robustness of our proposed framework under natural distribution shifts, we compare its performance against other zero-shot methods, as presented in Table~\ref{tab:oodperformance}. The results indicate that CGBC consistently outperforms CGBC Prior, highlighting the necessity of likelihood-based refinement for the concept prior distribution. Although both CGBC and CGBC Prior maintain a clear performance margin over CLIP, they fall behind view-based methods under significant domain shifts. This observation suggests that multi-view information is essential in scenarios with substantial distributional changes or noise. To address this, we propose an enhanced variant, CGBC + View. Given an input image $X$ and its $N_{\text{aug}}$ augmented views, we apply the CGBC framework to each view to obtain robust similarity scores. We then compute the entropy of the predicted distributions for each view, select the top 10\% of views with the lowest entropy (i.e., most confident predictions), following~\citep{shu2022tpt} and average their results.  Empirical results show that CGBC + View outperforms view-augmentation-based baselines by an average of 2 percents across four OOD datasets when the number of prompts is set to 50. Notably, it achieves particularly strong performance on ImageNet-A. These phenomenon further validates the effectiveness of concept-based prompt augmentation in enhancing generalization under distribution shifts.

\subsection{Running Time Analysis} 
All experiments were conducted using four NVIDIA 3090Ti GPUs. To ensure a fair comparison of computational efficiency, we report the inference-time adaptation cost for TPT, MTA, CGBC, and CGBC + View on the ImageNet dataset under identical hardware conditions. Specifically, we exclude model loading time and assume that text prompt embeddings are precomputed. The total adaptation times are approximately 11 hours 26 minutes for TPT, 3 hours 42 minutes for MTA, 2 minutes 43 seconds for CGBC, and 2 hours 33 minutes for CGBC + View. These results demonstrate that our proposed CGBC framework requires significantly less adaptation time compared to view-augmentation-based zero-shot approaches, while still achieving superior performance. Furthermore, even when incorporating multi-view inputs, CGBC + View remains more efficient than standard view-augmented zero-shot methods, owing to its optimization-free nature.
\begin{figure}[htbp]
    \centering
    \includegraphics[width=\linewidth]{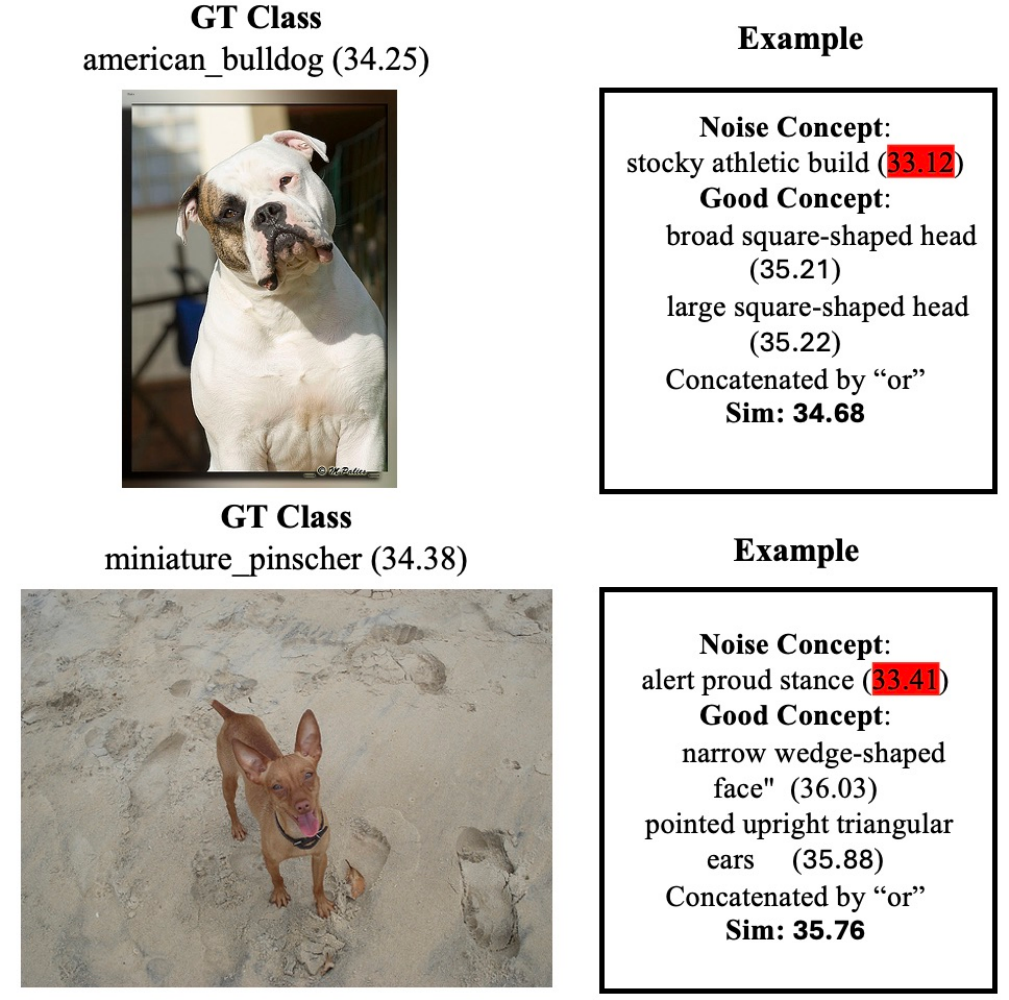}
    \caption{Illustrative examples of Compositionality in constructing the Concept Proposal Distribution.
We multiply the raw similarity values by CLIP's scaling factor.}
    \label{fig:q2}  
\end{figure}
\subsection{Qualitative Analysis}
In this subsection, we use illustrative examples to demonstrate the roles of discriminability and compositionality in constructing the concept proposal distribution. 
\textbf{Discriminability} Compared to descriptive prompts, our proposed LLM-driven Concept Synthesis Pipeline is more likely to generate discriminative concepts through the use of discriminative prompts. This suggests that while descriptive prompts can also produce effective discriminative prompts, the number and quality are generally lower than ours. The effectiveness of our approach is particularly evident when comparing discriminative concepts with shared concepts that are likely to be generated by descriptive prompts and are common to both classes.
As shown in Fig. \ref{fig:q1}, for the class beagle, when using CLIP with the original prompt format "a photo of {class}" instantiated with beagle and basset hound, the model misclassifies the image. Even when using a shared concept, a concept frequently generated by descriptive prompts and common to both classes, the reduction in the decision margin between the two classes is insufficient to correct the misclassification. In contrast, when using a discriminative concept, we can effectively adjust the margin and correct the classification. Fig.~\ref{fig:simplot} further illustrates how prompts enhanced with different types of concepts (common vs. discriminative) affect the similarity between the test image and both class prompts.
\textbf{Compositionality}: Among the atomic concepts generated by the LLM, there often exist noisy or outlier concepts as analyzed in the main body. In addition to the likelihood  function that helps mitigate the impact of such outliers, as illustrated in Fig. \ref{fig:q2}, compositional concepts can also alleviate the effect of poor atomic concepts by combining them with more informative ones. However, when noisy concepts constitute a significant portion of the atomic concept set, compositionality alone becomes insufficient. In such cases, a likelihood-based refinement of the concept prior remains essential to effectively suppress the influence of outlier concepts. \textbf{Diversity}: For a more comprehensive assessment of diversity, we refer readers to the quantitative analyses reported in the main text and to the final concept set released in our open-source repository.
\subsection{Supplementary Ablation studies}
\begin{table}[H]
\caption{The rows marked '(avg)' indicate results using concept prompt averaging to replace ``or''. We use 50 prompts).}
\label{tab:table10}
\centering
\begin{adjustbox}{width=\linewidth, center}
\begin{tabular}{lcccccccccccc}
\toprule
\textbf{Method} & \textbf{SUN.} & \textbf{Air.} & \textbf{Eur.} & \textbf{Car.} & \textbf{Food.} & \textbf{Pets} & \textbf{Flow.} & \textbf{Cal.} & \textbf{DTD} & \textbf{UCF} & \textbf{Ima.} & \textbf{Avg} \\
\midrule
CGBC Prior       & 68.5 & 26.1 & 59.3 & 66.6 & 83.8 & 88.5 & 72.4 & 94.0 & 55.1 & 71.8 & 68.3 & \underline{68.6} \\
Prior (avg) & 68.5 & 26.1 & 55.6 & 66.9 & 84.2 & 87.9 & 71.7 & 94.2 & 52.6 & 70.2 & 68.9 & 67.9 \\
CGBC             & 68.8 & 26.0 & 60.3 & 66.6 & 83.9 & 90.7 & 73.7 & 94.4 & 55.7 & 72.3 & 69.4 & \textbf{69.3} \\
CGBC (avg)       & 68.7 & 26.1 & 56.6 & 67.0 & 84.3 & 88.5 & 72.3 & 94.2 & 52.9 & 70.6 & 69.1 & 68.2 \\
\bottomrule
\end{tabular}
\end{adjustbox}
\end{table}
\noindent\textbf{Reasonability of using ``or'' for concept composition.}
We compose atomic concepts using natural-language coordination with ``or'' (e.g., ``\texttt{A or B}''). Although CLIP does not implement symbolic logic, it is trained on large-scale image--text data and can leverage the distributional semantics of common coordination patterns; thus ``or'' serves as a cue for \emph{alternatives}, encouraging a text representation compatible with multiple visual realizations. We additionally evaluate a non-compositional baseline that averages the CLIP text embeddings of the atomic prompts within a concept (Table~\ref{tab:table10}, \textit{Prior (avg)}). Empirically, ``or'' outperforms prompt averaging in our setting. We attribute this to the fact that textual composition preserves syntactic structure and a learned coordination pattern, whereas embedding averaging is a purely mixture that can dilute distinctive semantics.

\noindent\textbf{Versatility across concept generators.}
Beyond the GPT-family concept generators validated in the main text, we further evaluate CGBC using the Gemini series as the concept generator. Table~\ref{tab:model-benchmark} shows that CGBC maintains consistent performance across model groups: replacing GPT-4.1 turbo with Gemini 2.5 Flash, Gemini 2.5 Flash Lite, or Gemini 2.0 Flash results in only marginal changes in accuracy on all datasets, with the overall average remaining within a narrow band (61.0 vs.\ 59.9--60.4). These results indicate that CGBC is not tied to a specific LLM family and is robust to the choice of concept generator, supporting its practicality for deployment under different cost/latency constraints.

\begin{table}[t]
\centering
\caption{Performance on various model group.}
\label{tab:model-benchmark}
\begin{adjustbox}{width=\linewidth, center}
\begin{tabular}{lrrrrrrrr}
\toprule
\textbf{Avg} & \textbf{\(|L_i|\)} & \textbf{EuroSAT} & \textbf{DTD} & \textbf{Aircraft} & \textbf{Pets} & \textbf{UCF101} & \textbf{Average} & \textbf{Avg/EuroSAT} \\
\midrule
GPT-4.1 turbo      & 10 & 60.3 & 55.7 & 26.0 & 90.7 & 72.3 & 61.0 & 61.2 \\
Gemini 2.5 Flash   & 10 & 59.5 & 54.3 & 26.0 & 90.6 & 71.3 & 60.4 & 60.6 \\
Gemini 2.5 Flash Lite & 10 & 58.9 & 53.2 & 26.4 & 89.8 & 71.4 & 59.9 & 60.1 \\
Gemini 2.0 Flash   & 10 & 58.8 & 53.0 & 26.1 & 90.6 & 71.5 & 60.0 & 60.2 \\
\bottomrule
\end{tabular}
\end{adjustbox}
\end{table}

\section{Theoretical Analysis of Adaptive Soft-Trim based likelihood}
\subsection{Proof of Robust Guarantee}
We restate Theorem~\ref{thm:robust_mean} below for convenience.

\begin{theorem*}[Robust Guarantee]
If the size of concept set $\mathcal{C}_i$ for each class $Y_i$ satisfies $M_i \geq \frac{C_0 \log(1/\delta)}{\rho_i^2}$
for a universal constant $C_0$, then with probability at least $1-\delta$, the estimation error of $\hat{\mu}_{i}$ is bounded by:
\begin{equation}
\small
|\hat{\mu}_{i} - \mu_i| \leq C_1\sigma_i\rho_i + C_2\sigma_i\sqrt{\frac{\log(1/\delta)}{M_i}} + \frac{C_3\sigma_i}{k}
\end{equation}
where $C_1$, $C_2$, and $C_3$ are universal constants and $k$ is sigmoid slope parameter in Eq. \eqref{wij}.
\end{theorem*}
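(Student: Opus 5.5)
The plan is to split the estimation error of the soft-trimmed mean into three pieces, one for each term of the bound:
\begin{equation*}
|\hat{\mu}_i-\mu_i| \le |m_i-\mu_i| + \Bigl|\tfrac{\sum_j w_{i,j}(S_{i,j}-m_i)}{\sum_j w_{i,j}}\Bigr| .
\end{equation*}
The first piece is the error of the median anchor. The second is the weighted deviation around that anchor, and we further decompose it over inliers and outliers.

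\textbf{Step 1 (median anchor).} On the clean fraction, a sub-Gaussian tail bound together with the Median part (1) of $(\sigma_i,\alpha)$-goodness should show that $|m_i-\mu_i|\le C_1\sigma_i\rho_i + C\sigma_i\sqrt{\log(1/\delta)/M_i}$. To get this, apply Hoeffding's inequality to the indicators $\mathbb{I}[S_{i,j}\le \mu_i\pm t]$. The sample condition $M_i\ge C_0\log(1/\delta)/\rho_i^2$ ensures that the empirical fraction of outliers is at most $2\rho_i<1/2$ with probability $1-\delta$, so the empirical median cannot be dragged beyond the clean $(1/2\pm O(\rho_i))$-quantiles. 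For sub-Gaussian variables these quantiles lie within $O(\sigma_i\rho_i)$ of $\mu_i$. By the same argument, $\mathrm{MAD}_i=\Theta(\sigma_i)$ with the same probability.

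\textbf{Step 2 (weights on inliers and outliers).} Set $\beta=k\log\frac{1-\hat\rho_i}{\hat\rho_i}$. Since $\hat\rho_i<1/2$, we have $\beta>0$, so $w_{i,j}=\sigma(-\beta\, |S_{i,j}-m_i|/\mathrm{MAD}_i)$ decreases in the normalized deviation $r_j=|S_{i,j}-m_i|/\mathrm{MAD}_i$. Two facts follow from $\sigma(-x)\le e^{-x}$ and $\mathrm{MAD}_i=\Theta(\sigma_i)$:
\begin{itemize}
\item Each term satisfies $w_{i,j}|S_{i,j}-m_i| \le \mathrm{MAD}_i\, r_j e^{-\beta r_j}\le \mathrm{MAD}_i/(e\beta)=O(\sigma_i/k)$. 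This holds uniformly, including for adversarial $\zeta_{i,j}$, and it is what produces the $C_3\sigma_i/k$ term.
\item Clean points with $r_j=O(1)$ keep weight bounded below. So $\sum_j w_{i,j}\gtrsim (1-2\rho_i)M_i$, and the denominator is $\Omega(M_i)$.
\end{itemize}
For the inliers, the weighted average of $S_{i,j}-\mu_i$ over clean points should be handled by condition (2.a) with $\alpha\asymp\rho_i$. That gives $O(\sigma_i\rho_i\sqrt{\log(1/\rho_i)})$, which we absorb into $C_1\sigma_i\rho_i$ by loosening constants. Alternatively, a weighted Hoeffding bound gives $C_2\sigma_i\sqrt{\log(1/\delta)/M_i}$.

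One subtlety is that the weights depend on the data through $m_i$ and $\mathrm{MAD}_i$. The plan is to condition on the high-probability event from Step 1 and appeal to the "any weight vector" clause of goodness. That clause is uniform over weights, so it avoids the dependence issue.

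\textbf{Step 3 (assembly) and main obstacle.} Summing the median error, the inlier weighted error and the outlier contribution $\rho_i M_i\cdot O(\sigma_i/k)/\Omega(M_i)$ gives the stated bound after renaming constants. The main obstacle is Step 2. We need a uniform lower bound on the normalizer and a control on $\beta$ that does not degenerate when $\hat\rho_i$ is tiny or close to $1/2$. When $\hat\rho_i\to 1/2$, $\beta\to 0$ and the $1/\beta$ bound blows up. I would first try to show that $\hat\rho_i$ concentrates on the event from Step 1 in a band $[c\rho_i, 1/2-c']$, so that $\log\frac{1-\hat\rho_i}{\hat\rho_i}$ is bounded below by a constant. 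If that fails, I would state the bound with $\beta$ in place of $k$, treating this as the intended reading of the $C_3\sigma_i/k$ term.
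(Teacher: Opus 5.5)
Your first decomposition and the per-point bound $w_{i,j}|S_{i,j}-m_i|\le \mathrm{MAD}_i/(e\beta)$ are correct. The gap is the second bullet of Step 2, which is false. The remedy you plan for your ``main obstacle'' only protects against $\beta\to 0$, whereas this bullet breaks as $\beta\to\infty$.

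Since $w_{i,j}=\sigma(-\beta r_j)\le\tfrac12$ and $\sigma(-x)\le e^{-x}$, a clean point at the typical deviation $r_j=1$ has weight at most $e^{-\beta}$. By definition of $\mathrm{MAD}_i$, at least half of all points have $r_j\ge 1$. With the paper's $k=e^{4.6}$ and $\hat\rho_i\approx 0.1$, $\beta\approx 220$. Your band $\hat\rho_i\in[c\rho_i,\tfrac12-c']$ keeps $\beta$ of order $k$ (at least $c''k$), so $\beta$ is large whenever $C_3\sigma_i/k$ is small. In that regime $\sum_j w_{i,j}$ is carried by the points within $O(\mathrm{MAD}_i/\beta)$ of $m_i$, not by a $(1-2\rho_i)$-fraction of the sample.

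The $1/\beta$ you gain in the numerator and the shrinking normalizer are the same effect. For a Gaussian clean law the normalizer is of order $M_i/\beta$, and even that needs $M_i$ large compared with $\beta\log(1/\delta)$, which the hypothesis does not give. Your outlier term therefore comes out as $O(\sigma_i\rho_i)$, not $O(\sigma_i/k)$. For a clean law with little mass near its median the window can be empty and you get no bound at all. In short, $\rho_iM_i\cdot O(\sigma_i/k)/\Omega(M_i)$ hides a factor $1/\sigma(-\beta R)$, where $R$ is your $O(1)$ bound on $r_j$. That factor is exponentially large in $k$, and writing $\beta$ in place of $k$ does not remove it.

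The inlier step fails for the same reason. Condition (2.a) covers only weights in $[0,1]$ with average mass at least $1-\alpha$. Even after rescaling the soft-trim weights by $2$, a constant fraction of the inliers (those with $r_j\ge 1$) keep weight at most $2\sigma(-\beta)$. So for $\beta\ge 1$ the constraint holds only with $\alpha$ bounded below by a constant, never with $\alpha\asymp\rho_i$, and the uniformity over weights gives nothing better than $O(\sigma_i)$. Weighted Hoeffding needs weights independent of the $S_{i,j}$, whereas $w_{i,j}$ is a function of $S_{i,j}$ itself.

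The paper's proof does not supply the missing step. It also posits $W\approx(1-\rho_i)M_i$, applies Hoeffding for fixed weights, and gets $C_3\sigma_i/k$ from a transition-width heuristic.

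What does work is to keep your first decomposition but never lower-bound $W$ by $M_i$; compare each weight with the largest one instead. With $r_0=\min_j r_j$ one has $w_{i,j}/\max_l w_{i,l}\le 2e^{-\beta(r_j-r_0)}$. Splitting the sum at $r_j\le r_0+\log M_i/\beta$ gives, deterministically and whatever the outliers are, $|\hat\mu_i-m_i|\le 3\,\mathrm{MAD}_i\,(r_0+\log M_i/\beta)$ for $M_i\ge 3$, with $r_0=0$ when $M_i$ is odd. Combined with Step 1, $\mathrm{MAD}_i=O(\sigma_i)$ and $\beta\ge c''k$, this gives the stated bound up to a $\log M_i$ factor in the last term. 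It needs no inlier/outlier split and no use of (2.a).

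Finally, Step 1 cannot rest on sub-Gaussianity alone, nor on the goodness conditions, which are not hypotheses of the theorem. For a skewed or two-point sub-Gaussian clean law the median lies $\Theta(\sigma_i)$ from $\mu_i$, and $\mathrm{MAD}_i$ can vanish. You need a symmetric law with density bounded below near $\mu_i$, which the paper tacitly assumes by writing $\mathcal{N}(\mu_i,\sigma_i^2)$.
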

\begin{proof}
The proof proceeds by decomposing the total estimation error into three components: (1) a bias term arising from the adversarial contamination, (2) a statistical error term due to finite sampling, and (3) an approximation error term from the soft-trimming mechanism.

Let the set of samples be $\mathcal{S}_i = \{S_{i,1}, \dots, S_{i,M_i}\}$. By the Huber contamination model, we can conceptually partition $\mathcal{S}_i$ into a set of inliers $\mathcal{G}_i$ drawn from the 1-sub-Gaussian distribution $\mathcal{N}(\mu_i, \sigma_i^2)$ and a set of outliers $\mathcal{O}_i$ from an arbitrary adversarial distribution. We have $|\mathcal{G}_i| \geq (1-\rho_i)M_i$ and $|\mathcal{O}_i| \leq \rho_i M_i$.

The Adaptive Soft-Trim estimator is defined as $\hat{\mu}_{i} = \frac{\sum_{j=1}^{M_i} w_{i,j} S_{i,j}}{\sum_{j=1}^{M_i} w_{i,j}}$, where the weights $w_{i,j}$ are a decreasing function of the normalized deviation from the sample median, i.e., $w_{i,j} = f(-|S_{i,j} - m_i| / \text{MAD}_i)$ for some decreasing function $f$ (e.g., a  sigmoid we utlize in our implementation).

\paragraph{Robustness of Median and MAD}

A foundational result in robust statistics is that the median and Median Absolute Deviation (MAD) are robust estimators of location and scale. Under the given sample complexity condition  $M_i \geq \frac{C_0 \log(1/\delta)}{\rho_i^2}$, it can be shown  that with probability at least $1-\delta$~\cite{diakonikolas2018robustly}:
\begin{align}
    |m_i - \mu_i| &\leq O(\sigma_i \rho_i) \label{eq:median_robust} \\
    |\text{MAD}_i - c\sigma_i| &\leq O(\sigma_i \rho_i) \label{eq:mad_robust}
\end{align}
where $m_i = \text{median}(\mathcal{S}_i)$ and $c$ is the constant factor relating MAD to the standard deviation for the underlying clean distribution (e.g., $c \approx 0.6745$ for a Gaussian). This ensures that the center and scale used for trimming are themselves close to the true parameters, up to a factor of the contamination rate.
\paragraph{Error Decomposition}
We analyze the error $|\hat{\mu}_{i} - \mu_i|$ by triangle inequality. Let $W = \sum_{j=1}^{M_i} w_{i,j}$.
\begin{small}
\begin{align}
|\hat{\mu}_{i} - \mu_i| &= \frac{1}{W} \left| \sum_{j=1}^{M_i} w_{i,j} (S_{i,j} - \mu_i) \right| \notag\\
&\leq \frac{1}{W} \left| \sum_{j \in \mathcal{G}_i} w_{i,j} (S_{i,j} - \mu_i) \right| 
+ \frac{1}{W} \left| \sum_{j \in \mathcal{O}_i} w_{i,j} (S_{i,j} - \mu_i) \right|
\end{align}
\end{small}

\noindent\textbf{Bias from Contamination}: The second term, $\frac{1}{W} |\sum_{j \in \mathcal{O}_i} w_{i,j} (S_{i,j} - \mu_i)|$, captures the influence of outliers. Our proposed adaptive soft-trim based likelihood is designed to down-weight points with large deviations $|S_{i,j} - m_i|$. While outliers can be arbitrary, their influence is controlled. The theory of robust mean estimation~\cite{huber1992robust,diakonikolas2018robustly} establishes that for any estimator in this class, the bias induced by a $\rho_i$-fraction of contamination is at least $\Omega(\sigma_i \rho_i)$. Our proposed likelihood, being a variant of a trimmed mean, achieves a near-optimal rate. The bias from both the residual influence of down-weighted outliers and the error in the trimming center ($|m_i - \mu_i|$) is bounded by $C_1 \sigma_i \rho_i$.

\noindent\textbf{Statistical Error}: The first term, $\frac{1}{W} |\sum_{j \in \mathcal{G}_i} w_{i,j} (S_{i,j} - \mu_i)|$, represents the statistical error from estimating a mean from a finite number of the weighted inlier concepts. It shrinks as the sample size M increases. For $j \in \mathcal{G}_i$, $S_{i,j} - \mu_i$ are i.i.d. 1-sub-Gaussian random variables with mean 0. The sum is a weighted sum of such variables. By a weighted version of Hoeffding's inequality, for any fixed weights, this sum is bounded by $O(\sigma_i \sqrt{\sum w_{i,j}^2 \log(1/\delta)})$. Since $w_{i,j} \in [0, 1]$ and $W \approx (1-\rho_i)M_i$, we have $\sum w_{i,j}^2 \leq W$. The normalized sum is thus bounded by $O(\sigma_i \sqrt{\log(1/\delta)/W}) \approx O(\sigma_i \sqrt{\log(1/\delta)/M_i})$. This gives the term $C_2 \sigma_i \sqrt{\frac{\log(1/\delta)}{M_i}}$.

\noindent\textbf{Approximation Error} The use of a "soft" sigmoid function with slope $k$ instead of a "hard" threshold introduces a third error term. The weight function $w_{i,j}$ transitions from 1 to 0 over a region whose width is proportional to $1/k$ in the normalized deviation space. Inliers that fall into this transition region are unduly down-weighted, and outliers that fall into it are not fully removed. This imperfect trimming introduces a bias. The magnitude of this error is proportional to the width of the transition region, leading to an error term of the form $C_3 \sigma_i / k$. As $k \to \infty$, the soft-trim approaches a hard-trim, and this term vanishes.

Combining the three error bounds via the triangle inequality, we obtain the final result. With probability at least $1-\delta$, the total error is bounded by the sum of the bias, statistical, and approximation errors:
\begin{equation}  
|\hat{\mu}_{i} - \mu_i| \leq C_1\sigma_i\rho_i + C_2\sigma_i\sqrt{\frac{\log(1/\delta)}{M_i}} + \frac{C_3\sigma_i}{k}
\end{equation}
This completes the proof.
\end{proof}
\subsection{Proof of Multi-class excess risk}
We restate Corollary~\ref{cor:excess_risk} below for convenience.
\begin{corollary*}[Multi-class excess risk]
Under the condition in Theorem~\ref{thm:robust_mean}, the classifier for image $X$ in Eq. \eqref{eq:1} satisfies:
\begin{equation}
\small
\mathcal{R} - \mathcal{R}^* \leq \Pr\left[\text{margin}_{\mu}(X) \leq 2 \cdot \max_i |\hat{\mu}_{i}(X) - \mu_i(X)|\right]
\end{equation}
where $\mathcal{R}$ is the risk of our classifier, $\mathcal{R}^*$ is the optimal Bayes risk, and $\text{margin}_{\mu}(X)$ is the margin of the true mean $\mu_i(X)$. Consequently, With probability at least $1 -\delta$, the excess risk is bounded by:
\begin{small}
\begin{align}
\mathcal{R} - \mathcal{R}^* 
&\leq \Pr\Big[\text{margin}_{\mu}(X) \leq 2\big(C_1\sigma_{\max}\rho_{\max} 
+ \\ &C_2\sigma_{\max}\sqrt{\tfrac{\log(K/\delta)}{M_{\min}}} \notag 
+ \tfrac{C_3\sigma_{\max}}{k}\big)\Big]
\end{align}
\end{small}where $K$ is the number of classes, $\sigma_{\max} = \max_i \sigma_i$, $\rho_{\max} = \max_i \rho_i$, and $M_{\min} = \min_i M_i$.
\end{corollary*}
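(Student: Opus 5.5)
The plan is a standard plug-in classification argument, applied pointwise in $X$, followed by a union bound over the $K$ classes. Write $\mu_i(X)$ for the clean mean score of class $Y_i$. Take the Bayes-optimal rule to be $Y^*(X)=\arg\max_i \mu_i(X)$, and our rule to be $\hat Y(X)=\arg\max_i \hat\mu_i(X)$ from Eq.~\eqref{eq:1}. Define
\[
\text{margin}_{\mu}(X)=\mu_{Y^*(X)}(X)-\max_{i\neq Y^*(X)}\mu_i(X) \ge 0 ,
\qquad
\Delta(X)=\max_i|\hat\mu_i(X)-\mu_i(X)| .
\]

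\textbf{Step 1 (deterministic inclusion).} Suppose $\hat Y(X)=j\neq Y^*(X)$. Then $\hat\mu_j\ge\hat\mu_{Y^*}$, so
\[
\mu_{Y^*}-\mu_j \le (\mu_{Y^*}-\hat\mu_{Y^*})+(\hat\mu_j-\mu_j)\le 2\Delta(X).
\]
This gives $\text{margin}_\mu(X)\le\mu_{Y^*}-\mu_j\le 2\Delta(X)$. Hence the event $\{\hat Y(X)\neq Y^*(X)\}$ is contained in $\{\text{margin}_\mu(X)\le 2\Delta(X)\}$.

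\textbf{Step 2 (excess risk).} Write the pointwise excess conditional risk of $\hat Y$ relative to $Y^*$. It vanishes whenever $\hat Y(X)=Y^*(X)$. Otherwise it equals $\Pr[Y=Y^*\mid X]-\Pr[Y=\hat Y\mid X]\le 1$. Integrating over $X$ then gives
\[
\mathcal R-\mathcal R^*\le\Pr[\hat Y(X)\neq Y^*(X)]\le \Pr[\text{margin}_\mu(X)\le 2\Delta(X)],
\]
which is the first display of the corollary.

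This step requires that $\mu_i(X)$ order the classes as the true posterior does, i.e.\ that $\arg\max_i\mu_i(X)$ is Bayes-optimal. I will take this as the modelling assumption implicit in the paper's identification of $\mu_i$ with $p(Y_i\mid X)$ in the contamination model. I expect this to be the main obstacle: it is the one point that must be stated as an assumption rather than derived.

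\textbf{Step 3 (uniform bound via Theorem~\ref{thm:robust_mean}).} Apply Theorem~\ref{thm:robust_mean} to each class with confidence $\delta/K$. The sample-size condition becomes $M_i\ge C_0\log(K/\delta)/\rho_i^2$, which I read as the intended meaning of ``under the condition in Theorem~\ref{thm:robust_mean}''. A union bound then shows that, with probability at least $1-\delta$, simultaneously for all $i$,
\[
|\hat\mu_i-\mu_i|\le C_1\sigma_i\rho_i+C_2\sigma_i\sqrt{\log(K/\delta)/M_i}+C_3\sigma_i/k .
\]
Each term is monotone in $\sigma_i$, $\rho_i$ and $1/M_i$, so each may be replaced by $\sigma_{\max}$, $\rho_{\max}$ and $M_{\min}$. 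This yields a uniform bound $\Delta(X)\le B$, where $B$ is the bracketed quantity in the corollary.

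Finally, on this high-probability event the set $\{\text{margin}_\mu\le 2\Delta\}$ is contained in $\{\text{margin}_\mu\le 2B\}$. Monotonicity of probability then gives the second bound. One subtlety needs care: the randomness in $\hat\mu$ (the concept scores) versus in $X$. I would condition on the concept draw, so that the $1-\delta$ statement is over the concept randomness and the outer $\Pr$ is over $X$. Theorem~\ref{thm:robust_mean} is invoked for each fixed $X$, with constants independent of $X$.
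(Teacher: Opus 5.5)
Your Steps 1--3 follow the paper's proof. You bound the excess risk by $\Pr[\hat Y(X)\neq Y^*(X)]$ and show this event forces $\text{margin}_\mu(X)\le 2\Delta(X)$. You then apply Theorem~\ref{thm:robust_mean} at level $\delta/K$, take a union bound, and replace $\sigma_i,\rho_i,M_i$ by $\sigma_{\max},\rho_{\max},M_{\min}$. Your two caveats are correct and sharpen the paper's write-up. First, the theorem at confidence $\delta/K$ needs $M_i\ge C_0\log(K/\delta)/\rho_i^2$, not the stated $\log(1/\delta)$ condition. Second, $\mathcal R-\mathcal R^*\le\Pr[\hat Y\neq Y^*]$ silently assumes $\arg\max_i\mu_i(X)$ is Bayes-optimal, which the paper simply builds into its definition of $h^*$.

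The gap is in your last step, and the fix you propose does not close it. Invoking the theorem for each fixed $X$ gives, for each $X$, a concept-event $E_X$ of probability at least $1-\delta$ on which $\Delta(X)\le B$. To pass from $\Pr_X[\text{margin}_\mu(X)\le 2\Delta(X)]$ to $\Pr_X[\text{margin}_\mu(X)\le 2B]$, you need one event of probability at least $1-\delta$ on which $\Delta(X)\le B$ for $P_X$-almost every $X$, i.e.\ control of $\bigcap_X E_X$. Pointwise guarantees give no such control: the concept set is shared across all test images, so the $E_X$ are different events on the same probability space. Having constants independent of $X$ does not change this. The paper's proof makes the identical leap (``on this high-probability event\ldots''), so the defect lies in the corollary as stated, not only in your write-up.

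What is actually provable depends on the setting:
\begin{itemize}
\item If $X$ ranges over a finite test set of $n$ images, union bound over all $nK$ pairs. This replaces $\log(K/\delta)$ by $\log(nK/\delta)$ in both the bound and the sample-size condition.
\item In general, use $\{\text{margin}_\mu\le 2\Delta\}\subseteq\{\text{margin}_\mu\le 2B\}\cup\{\Delta>B\}$ together with Fubini, $\mathbb{E}_{\text{concepts}}\bigl[\Pr_X[\Delta(X)>B]\bigr]=\mathbb{E}_X\bigl[\Pr_{\text{concepts}}[\Delta(X)>B]\bigr]\le\delta$. This gives $\mathbb{E}_{\text{concepts}}[\mathcal R]-\mathcal R^*\le\Pr_X[\text{margin}_\mu(X)\le 2B]+\delta$.
\item By Markov applied to the same quantity, with probability at least $1-\sqrt\delta$ one gets $\mathcal R-\mathcal R^*\le\Pr_X[\text{margin}_\mu(X)\le 2B]+\sqrt\delta$.
\end{itemize}
In each case an additive term or an $n$-dependence replaces the clean ``with probability $1-\delta$'' statement.
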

\begin{proof}
The proof is in two parts. First, we relate the excess classification risk to the estimation error of the means $\mu_i(X)$. Second, we apply the bound from Theorem \ref{thm:robust_mean} to this relation.
\paragraph{Relating Excess Risk to Estimation Error}

Let $h^*(X) = \mathop{\arg\max}\limits_{i} \mu_i(X)$ be the Bayes optimal classifier for the true (uncontaminated) means. The excess risk of our classifier $\hat{h}(X) = \mathop{\arg\max}\limits_{i} \hat{\mu}_{i}(X)$ is upper-bounded by the probability of misclassification relative to the optimal one:
\begin{equation}
\mathcal{R}(\hat{h}) - \mathcal{R}^* \leq \mathbb{E}_X \left[ \mathbb{I}(\hat{h}(X) \neq h^*(X)) \right] = \Pr(\hat{h}(X) \neq h^*(X))
\end{equation}
A misclassification occurs if $\hat{h}(X) \neq h^*(X)$. Let $i^* = h^*(X)$ be the true optimal class. A misclassification implies that there exists some class $j \neq i^*$ such that $\hat{\mu}_{j}(X) > \hat{\mu}_{i^*}(X)$. We can establish a sufficient condition for this event. Consider the inequality:
\begin{small}
\begin{align*}
    \hat{\mu}_{j}(X) &> \hat{\mu}_{i^*}(X) \\
    \mu_j(X) + (\hat{\mu}_{j}(X) - \mu_j(X)) &> \mu_{i^*}(X) + (\hat{\mu}_{i^*}(X) - \mu_{i^*}(X)) \\
    \mu_{i^*}(X) - \mu_j(X) 
    &< (\hat{\mu}_{j}(X) - \mu_j(X)) \\
    &\quad - (\hat{\mu}_{i^*}(X) - \mu_{i^*}(X))
\end{align*}
\end{small}
By the triangle inequality, the right-hand side is bounded:
\begin{align*}
\mu_{i^*}(X) - \mu_j(X) &\leq |\hat{\mu}_{j}(X) - \mu_j(X)| + |\hat{\mu}_{i^*}(X) - \mu_{i^*}(X)| \\&\leq 2 \cdot \max_k |\hat{\mu}_{k}(X) - \mu_k(X)|
\end{align*}
The margin of the true means is defined as $\text{margin}_{\mu}(X) = \mu_{i^*}(X) - \max_{j \neq i^*} \mu_j(X)$. Since a misclassification implies the existence of a $j \neq i^*$ satisfying the above, it must be that the margin itself is bounded by this quantity. Therefore, the event $\{\hat{h}(X) \neq h^*(X)\}$ is a subset of the event $\{\text{margin}_{\mu}(X) \leq 2 \cdot \max_k |\hat{\mu}_{k}(X) - \mu_k(X)|\}$. This gives the first inequality of the corollary:
\begin{align*}
\mathcal{R}(\hat{h}) - \mathcal{R}^* \leq \Pr\left[\text{margin}_{\mu}(X) \leq 2 \cdot \max_i |\hat{\mu}_{i}(X) - \mu_i(X)|\right]
\end{align*}
\paragraph{Applying the High-Probability Bound}

Theorem \ref{thm:robust_mean} provides a bound on $|\hat{\mu}_{i}(X) - \mu_i(X)|$ for a single class $i$ that holds with probability $1-\delta$. We require a bound that holds simultaneously for all $K$ classes. Let $\delta' = \delta/K$. Applying Theorem \ref{thm:robust_mean} for each class $i \in \{1, \dots, K\}$ with failure probability $\delta'$, we have that with probability at least $1-\delta'$,
\begin{equation}
|\hat{\mu}_{i} - \mu_i| \leq C_1\sigma_i\rho_i + C_2\sigma_i\sqrt{\frac{\log(1/\delta')}{M_i}} + \frac{C_3\sigma_i}{k}
\end{equation}
By the union bound, the probability that at least one of these $K$ bounds fails is at most $\sum_{i=1}^K \delta' = K(\delta/K) = \delta$. Therefore, with probability at least $1-\delta$, the bound holds for all $i=1, \dots, K$ simultaneously.

On this high-probability event, we can bound the maximum estimation error:
\begin{footnotesize}
\begin{align*}
\max_i |\hat{\mu}_{i}(X) - \mu_i(X)| &\leq \max_i ( C_1\sigma_i\rho_i + C_2\sigma_i\sqrt{\frac{\log(K/\delta)}{M_i}} + \frac{C_3\sigma_i}{k}) \\
&\leq C_1\sigma_{\max}\rho_{\max} + C_2\sigma_{\max}\sqrt{\frac{\log(K/\delta)}{M_{\min}}} + \frac{C_3\sigma_{\max}}{k}
\end{align*}
\end{footnotesize}where $\sigma_{\max} = \max_i \sigma_i$, $\rho_{\max} = \max_i \rho_i$, and $M_{\min} = \min_i M_i$. Substituting this high-probability bound into the result from \textbf{Relating Excess Risk to Estimation Error}, we arrive at the final expression for the excess risk:
\begin{small}
\begin{align}
\mathcal{R}(\hat{h}) - \mathcal{R}^* 
&\leq \Pr\Big[\text{margin}_{\mu}(X) \leq 2\big(C_1\sigma_{\max}\rho_{\max} 
+ \\ &C_2\sigma_{\max}\sqrt{\tfrac{\log(K/\delta)}{M_{\min}}} \notag 
+ \tfrac{C_3\sigma_{\max}}{k}\big)\Big]
\end{align}
\end{small}
This bound connects the classifier's performance to the intrinsic difficulty of the problem, captured by the distribution of $\text{margin}_{\mu}(X)$, and the quality of our robust estimation procedure.
\end{proof}
\section{Limitations}
CGBC yields the largest gains on datasets where CLIP can reliably interpret LLM-generated concepts (e.g., EuroSAT, DTD), whereas on harder domains (e.g., Aircraft) improvements are smaller and some prior prompts can match or slightly outperform CGBC. We attribute this variation to two factors: (i) occasional outlier concepts produced by LLMs and (ii) dataset-dependent limitations of CLIP in mapping text concepts to visual evidence. When CLIP’s concept understanding is adequate, filtering and reweighting reduce the impact of outliers and CGBC improves over \textit{CGBC Prior} (e.g., EuroSAT: 59.3$\rightarrow$60.3; ImageNet: 68.3$\rightarrow$69.4). When CLIP’s zero-shot baseline is weak, CLIP’s representational limitations can dominate and CGBC provides limited benefit (e.g., Aircraft: 26.1$\rightarrow$26.0), where more descriptive prompts (e.g., CUPL) may be easier for CLIP to align with images.

From a theoretical standpoint, our method is motivated by a Bayesian view of zero-shot classification but relies on practical approximations. A fully Bayesian treatment would require marginalizing over the space of all concepts \(C\) to compute \(p(Y\!\mid\!X)\), which is computationally infeasible; we instead approximate this integral using a small set of high-quality concepts synthesized by an LLM pipeline. Moreover, the likelihood term \(p(X\!\mid\!C)\) is generative in nature, while CLIP is a discriminative model; consequently, we use CLIP image--text similarity as an empirical proxy for \(p(X\!\mid\!C)\). Developing a more principled probabilistic grounding of this likelihood approximation remains an open direction.


\end{document}